\documentclass[opre,nonblindrev]{informs3}
\SingleSpacedXI

\renewcommand{\theARTICLETOP}{}
\renewcommand{\theARTICLERULE}{\vspace{12pt}}
\renewcommand{\theRRHFirstLine}{}
\renewcommand{\theRRHSecondLine}{}
\renewcommand{\theLRHFirstLine}{}
\renewcommand{\theLRHSecondLine}{}
\renewcommand{\theECLRHFirstLine}{}%
\renewcommand{\theECLRHSecondLine}{}%
\renewcommand{\theECRRHFirstLine}{}%
\renewcommand{\theECRRHSecondLine}{}%
\renewcommand{\endproof}{\endTrivlist\addvspace{2ex}}
\makeatletter
\renewcommand\section{\@startsection {section}{1}{\z@}{-13pt plus -6pt minus -3pt}{4pt}%
  {\fs.13.15.\bfseries\RAGG}}%
\renewcommand\subsection{\@startsection{subsection}{2}{\z@}{-13pt plus -6pt minus -3pt}{4pt}%
  {\TEN\bfseries\RAGG}}%
\makeatother
\let\FigureCaptionFontStyle\undefined
\def\FigureCaptionFontStyle{\mdseries}

\usepackage[scaled=.98,sups]{XCharter}%

\usepackage[numbers]{natbib}
 \def\bibfont{\footnotesize}%
\TheoremsNumberedThrough   
\ECRepeatTheorems
\theoremstyle{TH}%
\OneAndAHalfSpacedXII
\usepackage{anyfontsize}
\usepackage{wrapfig}
\usepackage{xcolor}

\usepackage{setspace}

\DeclareMathAlphabet{\mathsfit}{T1}{\sfdefault}{\mddefault}{\sldefault}
\SetMathAlphabet{\mathsfit}{bold}{T1}{\sfdefault}{\bfdefault}{\sldefault}
\DeclareMathAlphabet{\mathcal}{OMS}{cmsy}{m}{n}
\usepackage[table, svgnames, dvipsnames]{xcolor}
\usepackage[hypertexnames=false]{hyperref}
\hypersetup{%
  colorlinks=true,%
  linkcolor={blue!66!black},%
  linkbordercolor=red,%
  citecolor={blue!66!black},
}
\usepackage{cite}
\usepackage{algorithm}
\usepackage{algorithmic}
\usepackage{mathtools}
\usepackage{booktabs,multirow}
\usepackage{enumitem}
\usepackage{array}
\usepackage{booktabs}

\newcolumntype{P}[1]{>{\raggedright\arraybackslash}p{#1}}
\setlist[enumerate,1]{label=\normalfont{(\Roman*)},leftmargin=*}

\newcommand*{\QED}{%
\leavevmode\unskip\penalty9999 \hbox{}\nobreak\hfill
    \quad\hbox{$\square$}%
}

\newcommand{\cR}{\mathcal{R}}

\newcommand{\diff}{\,\mathrm{d}}

\renewcommand{\emph}[1]{\textit{#1}}

\DeclarePairedDelimiterX{\inp}[2]{\langle}{\rangle}{#1, #2}
\definecolor{longhorn}{rgb}{0.8, 0.33, 0.0}

\TITLE{Variable Selection for Feature-Based Newsvendor}%
\ARTICLEAUTHORS{
\AUTHOR{Zhaoliang Yuan}
\AFF{School of Artificial Intelligence, The Chinese University of Hong Kong, Shenzhen, \EMAIL{225080014@link.cuhk.edu.cn}}
\AUTHOR{Jie Wang
}
\AFF{School of Artificial Intelligence and School of Data Science, The Chinese University of Hong Kong, Shenzhen, \EMAIL{jwang@cuhk.edu.cn}}
} %

\ABSTRACT{
Feature-based newsvendor models use observable covariates to tailor inventory decisions, aiming to balance holding and shortage costs under demand uncertainty. However, high-dimensional feature sets often hinder interpretability and inflate data collection and implementation costs. This paper studies variable selection for the feature-based newsvendor problem under a hard cardinality constraint on the number of selected features. We formulate the resulting \(\ell_0\)-constrained empirical newsvendor problem with \(\ell_2\)-regularization, establish its computational hardness, and develop a mixed-integer second-order cone programming reformulation that strengthens the standard Big-\(M\) formulation. To enable scalability beyond exact optimization, we develop a randomized-rounding algorithm with a bi-criteria guarantee and a greedy heuristic. Statistically, we provide theoretical analysis of the resulting sparse policy estimator, including finite-sample estimation error, out-of-sample risk bounds, and support recovery guarantees. Extensive experiments on both synthetic and real data illustrate the computational and statistical trade-offs among various baselines. Our results demonstrate that the proposed variable selection framework achieves competitive out-of-sample operational costs while using substantially fewer covariates.\\
{\textbf{Key Words: }Contextual Stochastic Optimization, Variable Selection, Statistical Analysis, Newsvendor}
}

\begin{document}

\maketitle

\section{Introduction}
The newsvendor problem is a classic inventory model that seeks to determine the optimal order quantity by balancing the cost of overstocking and understocking in the face of random demand. It is crucial to obtain an accurate estimate of customer demand, which is influenced by a multitude of features, including weather, time, location, and economic conditions. In the big data era, leveraging available data that pairs these influential features (covariates) with corresponding demand observations enables us to make data-driven predictions of future customer demand and more reliable decisions, while ignoring the presence of covariates can lead to inconsistent decisions~\citep{ban2019big}.

A natural paradigm for incorporating features into decision-making is contextual stochastic optimization (CSO)~\citep{sadana2025survey}. Broadly speaking, CSO can be divided into three classes: decision-rule optimization~\citep{liyanage2005practical, ban2019big}, estimate-then-optimize~\citep{bertsimas2020predictive}, and integrated estimation and optimization~\citep{donti2017task, kallus2023stochastic, qi2025integrated}. In this paper, we focus on the decision-rule optimization approach, aiming to learn an end-to-end mapping from features to decisions. Linear decision rules are attractive because of their simplicity and tractability~\citep{ban2019big}, but they may not capture complex feature-to-decision relationships. To improve out-of-sample performance , nonlinear decision rules based on kernel methods \citep{bertsimas2022data, notz2022prescriptive} and deep learning \citep{oroojlooyjadid2020applying, zhang2017assessing, huber2019data} have also been studied.

Despite this progress, the interpretability of learned decision rules has received comparatively less attention. In many applications, only a small subset of features is truly decision-relevant. Identifying such
features can improve interpretability, reduce data collection and monitoring costs, simplify downstream implementation, and improve robustness when irrelevant features are noisy or unstable. This motivates us to consider variable selection for feature-based newsvendor models.

Variable selection has been extensively studied in statistics and machine learning through sparsity-inducing formulations, including Lasso~\citep{tibshirani1996regression}, bridge-type penalties~\citep{huang2008asymptotic}, the smoothly clipped absolute deviation (SCAD) penalty~\citep{xie2009scad,Fan01122001}, and the minimax concave penalty (MCP)~\citep{zhang2010nearly}. These methods can be viewed as tractable surrogates for the ideal but computationally challenging \(\ell_0\)-constrained formulation. Convex penalties are typically easier to optimize, whereas nonconvex penalties may achieve sharper statistical properties but are harder to solve globally. In this paper, we study the following question:
\begin{quote}
\centering{\bf How can we efficiently solve variable selection for feature-based newsvendor and what statistical guarantees can be established?
}
\end{quote}
Our contributions are summarized as follows.
\begin{enumerate}
    \item 
\textbf{Exact and approximation algorithms.}
For the variable selection problem, we add an $\ell_2$-norm regularization and reformulate it as a mixed-integer second-order cone program~(MISOCP).
Compared with the standard Big-$M$ approach for variable selection, our MISOCP formulation yields a tighter continuous relaxation and enables a more efficient algorithm design.
Based on the reformulation, we further develop two efficient approximation algorithms and show that the  randomized-rounding method admits a bi-criteria guarantee. 
    \item \textbf{Statistical guarantees.}
Under the assumption that demand has sparse and linear dependence on features, we establish the concentration error bound between our estimated policy and the ground truth optimal policy, as well as the out-of-sample performance and support recovery guarantees.
These performance bounds break the curse of dimensionality because they crucially depend on the sparsity level instead of the data dimension. %
    \item \textbf{Numerical validation.}
We compare the exact optimization methods, randomized-rounding, greedy, and Lasso-based approaches using synthetic and retail data, illustrating their computational and statistical trade-offs.
\end{enumerate}

\subsection{Related Works}
\noindent\textbf{Data-Driven Newsvendor.}
The classical newsvendor model assumes that the demand distribution is known, whereas modern data-driven approaches learn ordering policies directly from historical data. Early work along these lines includes operational-statistics-based approaches \citep{liyanage2005practical} and feature-based linear policies for the big-data newsvendor \citep{ban2019big}. Subsequent work has developed richer data-driven models based on quantile regression and machine learning \citep{huber2019data}, reproducing kernel Hilbert space (RKHS)-based policy classes \citep{bertsimas2022data, bertsimas2020predictive}, and deep neural networks \citep{oroojlooyjadid2020applying, zhang2017assessing}. 
 \citet{serrano2024bilevel} provided a variable selection framework for the feature-based newsvendor problem. 
 They select key features by adding the sparsity budget constraint to the objective and tune penalty parameters using bilevel optimization. Our work contributes to this literature by studying hard-cardinality-constrained policy learning and by providing both computational and statistical guarantees.
\citet{feature_rich} also studied this framework, but their computational approach can be viewed as a heuristic for solving our $\ell_0$-constrained formulation, and their statistical analysis focused on the data privacy viewpoint.

\noindent\textbf{Quantile Regression.}
A fundamental observation in the newsvendor problem is that the optimal order quantity is a conditional quantile of the demand distribution. This connection links contextual newsvendor problems closely to quantile regression, beginning with the seminal work of \citet{koenker1978regression}. Quantile-regression-based methods have been used successfully in feature-based newsvendor models \citep{huber2019data}. In high-dimensional settings, sparse quantile regression has been studied under both \(\ell_1\)-regularized and \(\ell_0\)-type formulations \citep{belloni2011l1, dai2023variable}. The study in this paper can be interpreted as an \(\ell_0\)-constrained and \(\ell_2\)-regularized empirical quantile regression problem.

\noindent\textbf{Variable Selection.}
Variable selection is a central topic in statistics and machine learning. Classical approaches promote sparsity through penalized formulations~\citep{tibshirani1996regression}, or through greedy and local-search methods~\citep{li2024sparse, Exact2025, xie2020scalable, li2024beyond, civril2009selecting} that approximately solve $\ell_0$-constrained optimization problems. Although $\ell_0$-constrained formulations are generally NP-hard and computationally challenging in the worst case, they often perform well in high-dimensional, small-sample, and low-signal-to-noise settings~\citep{Bertsimas_2021, AtamturkRankoneCF, xie2020scalable}. This motivates the development of scalable algorithms for directly solving the $\ell_0$-constrained variable-selection problem for the newsvendor model, as well as theoretical analyses that quantify the optimality gap of approximation algorithms. Compared with classical sparse ridge regression, our problem is further complicated by an empirical loss that is piecewise linear and asymmetric. To address these challenges, we combine perspective reformulations and support-selection algorithms with a statistical analysis tailored to the quantile-regression structure of the newsvendor loss.

\noindent\textbf{Notations. }
For real numbers $a,b$, we let $a\lor b=\max\{a,b\}$.
Given a positive integer $n$, define $[n]=\{1,\ldots,n\}$.
For a distribution function $F(\cdot)$, we denote $F^{-1}(p)=\inf\{x\in\mathbb{R}:~F(x)\ge p\}, p\in[0,1]$.
Given an event $E$, define $\textbf{1}_{E}(x)=1$ if $x\in E$ and otherwise $\textbf{1}_{E}=0$.
For a vector $s\in\mathbb{R}^n$, we use $s[i]$ to denote its $i$-th component, and define its $\ell_0$-norm $\|s\|_0=\sum_{i\in[n]}\textbf{1}_{s[i]\ne0}$.
Given an $m\times n$ matrix $\textbf{X}$ and a set $S\subseteq[n]$, let $\textbf{X}_S$ denote the submatrix of $\textbf{X}$ with columns from the set $S$.
Given a set $\mathcal{F}$, let $\mathrm{Conv}(\mathcal{F})$ denote its convex hull, i.e., the smallest convex set containing $\mathcal{F}$.

\section{Problem Setup}
The newsvendor problem seeks to solve the following stochastic optimization problem:
\begin{equation}\label{Eq:newsvendor}
\min_{z\geq 0}~\mathbb{E}[b\max\{y-z,0\} + h\max\{z-y,0\}],
\end{equation}
where the expectation is taken with respect to the random demand $y$, $z$ denotes the order quantity, and $b$, $h$ represent the unit back-ordering and holding costs, respectively.
A central challenge in solving \eqref{Eq:newsvendor} is that the underlying demand distribution is typically unknown.

In the context of big data, auxiliary features can be used for accurate demand estimation.
Let \(x\in\mathbb{R}^D\) denote a feature vector. For a given \(x\), the optimal order quantity can be expressed as a function $f(x)$ with
\[
f(x) = \underset{z\ge0}{\mathrm{argmin}}~\Big\{\mathbb{E}_{y\mid x}\big[b\max\{y-z,0\} + h\max\{z-y,0\}\big]\Big\}.
\]
Rather than first estimating the conditional demand distribution and then solving the resulting newsvendor problem, one can directly learn an end-to-end policy function \(f\) that maps features to decisions by solving the following contextual stochastic optimization~(CSO) problem:
\begin{equation}
\min_{f\in\mathcal{F}}~\mathbb{E}_{(x,y)}\big[b\max\{y-f(x),0\} + h\max\{f(x)-y,0\}\big].\label{Eq:CSO}
\end{equation}
The choice of the function class $\mathcal{F}$ critically influences both computational tractability and out-of-sample performance. In this paper, we focus on sparse linear decision rules of the form
\[
\mathcal{F} = \Big\{
f(x) = \omega^\top x+c:~\omega\in\mathbb{R}^D, c\in\mathbb{R}, \|\omega\|_0\le d
\Big\}.
\]
The \(\ell_0\)-constraint ensures that at most \(d\) covariates actively affect the decision, thereby facilitating interpretable and explainable policies. 
Under the linear decision-rule model, the subproblem associated with each fixed support is a convex optimization and therefore enables us to develop mixed-integer convex program reformulations for our variable selection framework. This framework can be extended to nonlinear decision rules through basis expansion.
\begin{remark}[Linear Demand Model]
Consider the linear demand model $y = (\omega^*)^\top x + \epsilon$, where $\epsilon$ is a random variable independent of $x$ with cumulative distribution function $F_\epsilon$. 
Let $\tau = b/(b+h)$.
The optimal feature-dependent newsvendor decision is the conditional \(\tau\)-quantile of \(y\) given \(x\), namely $f^*(x)=(\omega^*)^\top x+F_\varepsilon^{-1}(\tau).$
Under this linear demand model, the slope in the linear decision rule recovers the true coefficient \(\omega^*\), while the intercept estimates the \(\tau\)-quantile of the noise.
\end{remark}

\begin{remark}[Nonnegativity of order quantities]
The optimal newsvendor decision has the nonnegative constraint.
For the linear decision rule approach, we clip the estimated optimal order quantity to zero if it is negative. When demand is always non-negative, this procedure does not increase the newsvendor loss. 
In the main analysis, we focus on the unconstrained formulation in \eqref{Eq:CSO} for simplicity.
\end{remark}

The objective in \eqref{Eq:CSO} involves the population expectation under the joint distribution of \((x,y)\). In practice, however, only independent and identically distributed~(i.i.d.) samples \(\{(x_i,y_i)\}_{i=1}^n\) are available. We therefore replace the population expectation with the empirical average and include an \(\ell_2\)-regularization term to improve both computational tractability and out-of-sample performance, which will be discussed in later sections.
In summary, it leads to the optimization problem
\begin{equation}\label{eq:5}
\min_{\omega,c:~\|\omega\|_0 \leq d} ~~ F(\omega,c):=\frac{1}{n}\sum_{i=1}^n\Big[b \cdot \max\{0, y_i - \omega^\top x_i-c\} + h \cdot \max\{0, \omega^\top x_i+c - y_i\}\Big] + \frac{1}{2\gamma}\|\omega\|^2_2,
\end{equation}
Problem~\eqref{eq:5} is nonconvex because of the \(\ell_0\)-constraint, and is therefore generally difficult to solve. In fact, we show that it is NP-hard by a reduction from the exact cover by 3-sets problem.
Its proof is provided in Appendix~\ref{Proof:Proposition:hardness}.
\begin{proposition}[Computational Hardness]\label{prop:hardness}
For fixed \(\gamma,b,h>0\), Problem~\eqref{eq:5} is NP-hard whenever the sparsity level \(d\) is part of the input.
\end{proposition}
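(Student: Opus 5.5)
We reduce from Exact Cover by 3-Sets (X3C). An instance consists of a ground set $U=[m]$ with $m=3q$ and a family $\mathcal{C}=\{C_1,\ldots,C_D\}$ of 3-element subsets of $U$. The question is whether some $q$ sets of $\mathcal{C}$ partition $U$. The plan is to encode the instance as data $\{(x_i,y_i)\}$ with $d=q$, and to compute a threshold $\theta$ such that the optimal value of \eqref{eq:5} is at most $\theta$ if and only if an exact cover exists. This is the same template used for NP-hardness of $\ell_0$-constrained least squares (Natarajan), adapted to the piecewise-linear newsvendor loss and to the ridge term.

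\textbf{Construction.} Let $A\in\{0,1\}^{m\times D}$ be the incidence matrix, with $A_{jk}=1$ iff $j\in C_k$. For each element $j$ we create a sample $x_j=A_{j,\cdot}^\top$ with target $y_j=1$. To neutralize the intercept we add two anchor groups. The first is many copies of $(x,y)=(0,0)$, weighted so heavily that any $c\ne 0$ is strictly penalized. The second is the mirrored samples $(-x_j,-1)$. The mirrored samples make the loss symmetric in sign, so the asymmetry between $b$ and $h$ does not favor over- or under-ordering. Weights are realized by replication, which keeps the instance polynomial in size. Suppose an exact cover $S$ exists. Set $\omega_k=t$ for $k\in S$, $\omega_k=0$ otherwise, and $c=0$. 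Then every element sample attains prediction $t$, because each $j$ is covered exactly once. The objective becomes $\frac{1}{n}\cdot 2m\cdot \phi(t)+\frac{q t^2}{2\gamma}$, where $\phi(t)=b(1-t)$ for $t\le1$. Minimizing over $t\in[0,1]$ gives an explicit value $\theta(q)$. In particular, if $\gamma$ is large relative to $n$, the optimum is at $t=1$ with value $q/(2\gamma)$. Since $\gamma$ is fixed, we instead scale the data: multiply all $y$ and $x$ by a factor $\lambda$, chosen polynomially, so that the ridge term is small relative to the loss slope.

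\textbf{Converse (main obstacle).} We must show that if no exact cover exists, every $\omega$ with $\|\omega\|_0\le q$ and every $c$ give objective strictly greater than $\theta$. First, the anchors force $c=0$, or else the cost strictly exceeds $\theta$. Then for a support $S$ with $|S|\le q$, the selected sets either leave some element uncovered, or, since $3|S|\le m$, cover some element twice and therefore leave another uncovered. An uncovered element $j$ has prediction $0$, incurring loss $b\lambda$ on its sample and $h\lambda$-type loss on its mirror. The key step is then a quantitative lower bound showing that this forced uncovered loss exceeds the total savings available. We would argue as follows. The loss is at least $\frac{1}{n}\sum_j |\,\cdot\,|$-type terms, weighted by $\min(b,h)$, on the residuals $1-(A\omega)_j$. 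Summing over all $j$ relates the residuals to $3\sum_k\omega_k$. Combining this with the ridge term via Cauchy–Schwarz, $\sum_{k\in S}\omega_k\le\sqrt{q}\|\omega\|_2$, yields that the minimum over a non-cover support is at least $\theta$ plus a gap. The gap is inverse-polynomial, namely of order $\min(b,h)\lambda/n$ minus ridge effects. If a direct bound is messy, we would fall back on the strictly easier route of taking $\lambda$ huge, so the problem is essentially $\min \sum_j|1-(A\omega)_j|$ over $q$-sparse $\omega$, whose value is $0$ iff an exact cover exists. The ridge perturbation is then at most $O(q\lambda^{-2}\cdot\lambda^2/\gamma)$ after rescaling $\omega$; we handle it by scaling $x$ by $\lambda$ but $y$ by $\lambda^2$, so the needed $\omega$ is of size $\lambda$ and the ridge cost is $O(q\lambda^2/\gamma)$, while the loss gap is of order $\lambda^2$ times a constant. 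Tuning the two scale exponents so that the loss gap dominates the ridge term is the delicate calculation. The gap for the $\ell_1$ problem on integer data is bounded below by $1/\mathrm{poly}$, by a vertex/rationality argument on the LP for fixed support. This completes the reduction, and all numbers have polynomial bit length.
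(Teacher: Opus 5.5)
Your construction (incidence rows with target $1$, mirrored copies $(-x_j,-1)$, and a common scaling of $x$ and $y$ by $\lambda$) matches the paper's, apart from your extra anchor samples. The converse direction, however, which you yourself call the main obstacle, is never established. Your sentence says that the $\min(b,h)$-weighted residual bound, the identity linking $\sum_j r_j$ to $3\sum_k\omega_k$, and Cauchy--Schwarz together ``yield'' a value of at least $\theta$ plus a gap on every non-cover support. That is an assertion: no inequality is written down, $\lambda$ is never fixed, and you immediately hedge with a fallback.

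The fallback fails. If $x$ is scaled by $\lambda$ and $y$ by $\lambda^2$, substitute $\omega=\lambda\omega'$, $c=\lambda^2c'$ and use positive homogeneity of the loss. The new objective is then exactly $\lambda^2$ times the unscaled one, with the same sparsity pattern. So the loss-to-ridge ratio is unchanged and there is nothing to tune. Scaling helps only if the $y$-factor grows more slowly than the square of the $x$-factor, as with the common factor you first proposed.

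The paper supplies the missing step explicitly. Let $r=\mathbf{1}-A\omega$. The mirrored pairs give empirical loss at least $\frac{(b\land h)\lambda}{m}\|r\|_1$ for every $c$, via $|u-c|+|u+c|\ge 2|u|$. If $\|\omega\|_2^2\le q$, then $q-\|\omega\|_2^2\le q-(\mathbf{1}^\top\omega)^2/q=(q-\mathbf{1}^\top\omega)(1+\mathbf{1}^\top\omega/q)\le 2(q-\mathbf{1}^\top\omega)=\tfrac{2}{3}\mathbf{1}^\top r\le\tfrac{2}{3}\|r\|_1$. Hence $F\ge \frac{q}{2\gamma}+\bigl(\frac{(b\land h)\lambda}{m}-\frac{1}{3\gamma}\bigr)\|r\|_1$. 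If instead $\|\omega\|_2^2>q$, then trivially $F>\frac{q}{2\gamma}$.

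You actually had a shorter fix in hand. If no exact cover exists, every $\omega$ with $\|\omega\|_0\le q$ leaves some element $j$ uncovered, so $r_j=1$ exactly. That one mirrored pair costs $b(\lambda-c)_{+}+h(\lambda-c)_{-}+b(-\lambda-c)_{+}+h(-\lambda-c)_{-}\ge (b\land h)(|\lambda-c|+|\lambda+c|)\ge 2(b\land h)\lambda$ for every $c$. Hence $F\ge (b\land h)\lambda/m$. This exceeds the entire ridge cost $q/(2\gamma)$ of the cover solution $(\mathbf{1}_S,0)$ once $\lambda>mq/(2\gamma(b\land h))$.

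This argument needs no Cauchy--Schwarz. It needs no anchors: the mirrored pairs already neutralize $c$, so your unproved claim that anchors force $c=0$ ``or else the cost exceeds $\theta$'' can be dropped. It also needs no LP vertex/rationality argument, since the $\ell_1$ gap is trivially at least $1$. The paper's sharper inequality only buys a scaling constant $M_0>m/(3\gamma(b\land h))$ that does not grow with $q$.

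Finally, for a cover with common weight $t\le 1$, each element together with its mirror costs $(b+h)(1-t)$ (times the scaling), not $2b(1-t)$. For large $\lambda$ the optimum is $t=1$, giving $\theta=q/(2\gamma)$.
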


Problem~\eqref{eq:5} is also related to sparse quantile linear regression. Indeed, it is equivalent to
\[
\min_{\omega,c:\ \|\omega\|_0\le d}
\left\{
\frac{(b+h)}{n}\sum_{i=1}^n
\rho_{\tau}\big(y_i-(\omega^\top x_i+c)\big)
+
\frac{1}{2\gamma}\|\omega\|_2^2
\right\},
\]
where $\tau=\frac{b}{b+h}$ and $\rho_{\tau}(u)=u\big(\tau-\mathbf{1}_{\{u<0\}}\big)$. 
In other words, the goal of variable selection for the feature-based newsvendor is equivalent to estimating sparse coefficients that minimize the \(\ell_2\)-regularized empirical quantile regression loss.

The remainder of the paper is organized as follows. In Section~\ref{sec:exact}, we present a mixed-integer second-order cone reformulation of Problem~\eqref{eq:5}, which can be solved exactly for medium-sized instances. In Section~\ref{sec:approximation}, we develop two approximation algorithms with performance guarantees. In Section~\ref{Sec:statistical}, we establish statistical guarantees for the proposed approach, including consistency, sample complexity, and support recovery.

\section{Exact Computational Algorithms}\label{sec:exact}

In this section, we develop exact algorithms for solving Problem~\eqref{eq:5} to global optimality. A standard approach is the \emph{Big-$M$} reformulation proposed by \citet{dai2023variable}, which converts the sparsity-constrained problem into a mixed-integer convex program.

\noindent\textbf{Big-$M$ Method.}
Recall that the constraint \(\|\omega\|_0 \le d\) requires that at most \(d\) entries of \(\omega\) be nonzero. To model the support of \(\omega\), we introduce a binary vector \(s\in\{0,1\}^D\), where \(s[i]=1\) indicates that \(\omega[i]\) is allowed to be nonzero. Equivalently, we impose
$\mathbf{1}\{\omega[i]\neq 0\}\le s[i], i\in[D].$
Assume there exists a vector \(M\in\mathbb{R}^D_+\) such that
$|\omega[i]|\le M[i],  i\in[D],$ for some optimal solution \(\omega\). Then the logical constraint can be equivalently enforced by the linear inequality $|\omega[i]|\le M[i]\,s[i],  i\in[D].$ This observation leads to the following reformulation of Problem~\eqref{eq:5}.
\begin{proposition}[Big-$M$ reformulation~{\citep{dai2023variable}}]\label{prop:big:M}
Assume there exists a vector \(M\in\mathbb{R}^D_+\) such that an optimal solution \(\omega^*\) to Problem~\eqref{eq:5} satisfies $|\omega^*[i]|\le M[i],~\forall i\in[D].$
Then Problem~\eqref{eq:5} is equivalent to
\begin{equation}\label{eq:bigM}
\begin{aligned}
\min_{\omega,c,s}\quad
& \frac{1}{n}\sum_{i=1}^n\Big[
b\max\{0,\, y_i-\omega^\top x_i-c\}
+
h\max\{0,\, \omega^\top x_i+c-y_i\}
\Big]
+\frac{1}{2\gamma}\|\omega\|_2^2\\
\text{s.t.}\quad
& \sum_{i\in[D]} s[i]\le d,~~~s\in\{0,1\}^D,\\
& |\omega[i]|\le M[i]\,s[i],\qquad \forall i\in[D].
\end{aligned}
\tag{Big-$M$}
\end{equation}
\end{proposition}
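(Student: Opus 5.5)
We argue that the two problems have the same optimal value and that optimal solutions can be transferred between them. The objective of \eqref{eq:bigM} does not depend on $s$ and coincides with $F(\omega,c)$ in \eqref{eq:5}. It therefore suffices to compare feasible sets through their projections onto $(\omega,c)$. Let $\mathcal{P}$ be the feasible set of \eqref{eq:5}, namely $\{(\omega,c):\|\omega\|_0\le d\}$. Let $\mathcal{Q}$ be the projection onto $(\omega,c)$ of the feasible set of \eqref{eq:bigM}.

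\textbf{Step 1: $\mathcal{Q}\subseteq\mathcal{P}$.} Take any $(\omega,c,s)$ feasible for \eqref{eq:bigM}. Whenever $s[i]=0$, the constraint $|\omega[i]|\le M[i]s[i]$ forces $\omega[i]=0$. Hence $\mathbf{1}\{\omega[i]\ne0\}\le s[i]$ for every $i$, and so $\|\omega\|_0\le\sum_i s[i]\le d$. It follows that the optimal value of \eqref{eq:bigM} is at least that of \eqref{eq:5}.

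\textbf{Step 2: the optimal solution $(\omega^*,c^*)$ of \eqref{eq:5} lifts to \eqref{eq:bigM}.} Let $(\omega^*,c^*)$ be the optimal solution from the hypothesis. Define $s[i]=\mathbf{1}\{\omega^*[i]\ne0\}$. Then $s\in\{0,1\}^D$ and $\sum_i s[i]=\|\omega^*\|_0\le d$. For $i$ with $s[i]=1$, the hypothesis gives $|\omega^*[i]|\le M[i]=M[i]s[i]$. For $i$ with $s[i]=0$, both sides equal zero. So $(\omega^*,c^*,s)$ is feasible for \eqref{eq:bigM} and attains the value $F(\omega^*,c^*)$, which is the optimum of \eqref{eq:5}. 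Combining this with Step 1, the two optimal values are equal. Moreover, every optimal $(\omega,c,s)$ of \eqref{eq:bigM} projects by Step 1 to a feasible point of \eqref{eq:5} with optimal value, and is therefore optimal for \eqref{eq:5}.

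\textbf{Main subtlety.} The main point to handle carefully is that the hypothesis is needed only for one optimal solution, not for all of them. The Big-$M$ constraint may cut off other minimizers of \eqref{eq:5}. This does no harm, because equivalence is understood as equal optimal values together with optimal solutions of \eqref{eq:bigM} being optimal for \eqref{eq:5}, and Step 2 only uses the single bounded minimizer.

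We should also check that this minimizer exists. For fixed support, $F$ is coercive in $\omega$ because of the term $\frac{1}{2\gamma}\|\omega\|_2^2$. It is also convex and piecewise linear in $c$, and it is coercive in $c$ as long as $b,h>0$. There are finitely many supports, so a minimizer exists. In any case, existence is implicitly assumed in the statement.
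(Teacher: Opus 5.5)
Your proof is correct and follows the same route as the paper. The paper defers the formal proof to Dai et al. and justifies the proposition in the text before it, noting that $|\omega[i]|\le M[i]s[i]$ forces $\omega[i]=0$ whenever $s[i]=0$. This encodes $\mathbf{1}\{\omega[i]\neq 0\}\le s[i]$, and hence $\|\omega\|_0\le d$. The bound $|\omega^*[i]|\le M[i]$ then ensures an optimal solution survives the encoding. These are exactly your Steps~1 and~2.

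The closing existence digression is unnecessary, because the hypothesis already supplies an optimal solution. If you keep it, coercivity in $\omega$ and in $c$ separately does not by itself give a minimizer; you need joint coercivity in $(\omega,c)$. That does hold here: the loss is at least $\frac{b\land h}{n}\big(|c|-\|x_1\|_2\|\omega\|_2-|y_1|\big)$, while the ridge term grows quadratically in $\|\omega\|_2$.
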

Big-$M$ method is standard but often computationally expensive for large-scale mixed-integer problems. In practice, such formulations are typically solved by branch-and-bound, which repeatedly solves convex relaxations. A major difficulty is that the Big-$M$ formulation usually yields a weak relaxation: the optimality gap between \eqref{eq:bigM} and its continuous relaxation, obtained by replacing \(s\in\{0,1\}^D\) with \(s\in[0,1]^D\), can be substantial. 

Some heuristics can help derive tight bounds of $M$ and improve numerical performance. For example, let $(\omega_0,c_0)$ be a near-optimal solution to Problem~\eqref{eq:5}, which can be obtained using some tractable algorithms outlined in Section~\ref{sec:approximation}.
Any optimal solution to Problem~\eqref{eq:5} satisfies that 
\[
\frac{\|\omega^*\|_2^2}{2\gamma}\le F(\omega^*,c^*)\le F(\omega_0,c_0).
\]
Based on this relation, the vector $M$ with $M[i]=\sqrt{2\gamma F(\omega_0,c_0)}, i\in[D]$ is a valid choice because $|\omega^*[i]|\le\|\omega^*\|_2\le M[i]$.
However, 
solving the Big-$M$ formulation remains challenging for large-scale datasets.

Motivated by this limitation, we next introduce an alternative reformulation that is more computationally tractable. We show that it has a tighter convex relaxation than the Big-$M$ approach and is therefore better suited for exact computation.

\noindent\textbf{Perspective Reformulation.}
We introduce a stronger reformulation than the Big-$M$ approach, based on the perspective function. 
To this end, we first rewrite the quadratic regularization term by introducing auxiliary variables \(\mu\in\mathbb{R}^D_+\):
\[
\frac{1}{2\gamma}\|\omega\|_2^2
=
\min_{\mu\in\mathbb{R}_+^D}
\left\{
\frac{\|\mu\|_1}{2\gamma}:
\omega[i]^2\le \mu[i],\ \forall i\in[D]
\right\}.
\]
Therefore, Problem~\eqref{eq:5} is equivalent to the following extended Big-$M$ formulation:
\begin{subequations}\label{eq:bigM:extended}
\begin{align}
\min_{\omega,c,s,\mu}\quad
& \frac{1}{n}\sum_{i=1}^n\Big[
b \cdot \max\{0, y_i-\omega^\top x_i-c\}
+
h \cdot \max\{0, \omega^\top x_i+c-y_i\}
\Big]
+\frac{\|\mu\|_1}{2\gamma},\\
\text{s.t.}\quad
& \sum_{j\in[D]} s[j]\le d, \quad s\in\{0,1\}^D,\\
& (\omega[j], s[j], \mu[j])\in\mathcal{S}_j,\quad \forall j\in[D],\label{eq:bigM:extended:c}
\end{align}
\end{subequations}
where for each coordinate $j\in[D]$, define the set 
\[
\mathcal{S}_j
:=
\Big\{
(\omega[j], s[j], \mu[j])\in\mathbb{R}\times\{0,1\}\times\mathbb{R}_+:
\omega[j]^2\le \mu[j],\ 
|\omega[j]|\le M[j]\,s[j]
\Big\}.
\]
\citet{gunluk2011perspective} have characterized the convex hull of $\mathcal{S}_j$ as
\[
\operatorname{Conv}(\mathcal{S}_j)
=
\Big\{
(\omega[j], s[j], \mu[j])\in\mathbb{R}\times[0,1]\times\mathbb{R}_+:
\omega[j]^2\le \mu[j] s[j],\ 
|\omega[j]|\le M[j]\,s[j]
\Big\}.
\]
By convexifying the constraint \eqref{eq:bigM:extended:c}, we can obtain the perspective reformulation of \eqref{eq:5}.
\begin{proposition}[Perspective Reformulation]
Under the same condition as in Proposition~\ref{prop:big:M}, 
Problem~\eqref{eq:5} is equivalent to 
\begin{equation}\label{eq:misocp:final}\tag{Perspective}
\begin{aligned}
\min_{\omega,c,s,\mu}\quad
& \frac{1}{n}\sum_{i=1}^n\Big[
b \cdot \max\{0, y_i-\omega^\top x_i-c\}
+
h \cdot \max\{0, \omega^\top x_i+c-y_i\}
\Big]
+\frac{1}{2\gamma}\sum_{j\in[D]}\mu[j],\\
\text{s.t.}\quad
& \sum_{j\in[D]} s[j]\le d, \quad s\in\{0,1\}^D,\\
& (\omega[j], s[j], \mu[j])\in\operatorname{Conv}(\mathcal{S}_j),\quad \forall j\in[D].
\end{aligned}
\end{equation}
\end{proposition}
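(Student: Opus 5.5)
The plan is a chain of equivalences: Problem~\eqref{eq:5} $\Leftrightarrow$ \eqref{eq:bigM} (Proposition~\ref{prop:big:M}) $\Leftrightarrow$ the extended formulation \eqref{eq:bigM:extended} $\Leftrightarrow$ \eqref{eq:misocp:final}. The key observation is that when $s$ is binary, the sets $\mathcal{S}_j$ and $\operatorname{Conv}(\mathcal{S}_j)$ impose the same restriction on $(\omega[j],\mu[j])$. Hence replacing one by the other changes nothing once the integrality constraint $s\in\{0,1\}^D$ is kept.

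\textbf{Step 1: from \eqref{eq:5} to \eqref{eq:bigM:extended}.} Proposition~\ref{prop:big:M} shows that \eqref{eq:5} is equivalent to \eqref{eq:bigM}. In \eqref{eq:bigM} I replace $\frac{1}{2\gamma}\|\omega\|_2^2$ by $\min_{\mu\ge0}\{\|\mu\|_1/(2\gamma):\omega[j]^2\le\mu[j]\}$. This is exact because the minimum is attained at $\mu[j]=\omega[j]^2$. The result is \eqref{eq:bigM:extended}, with the same optimal value, and with optimal $(\omega,c,s)$ corresponding after setting $\mu[j]=\omega[j]^2$.

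\textbf{Step 2: the two coordinate sets agree for binary $s[j]$.} Fix $j$.
\begin{itemize}
\item If $s[j]=1$, the condition $\omega[j]^2\le\mu[j]s[j]$ is exactly $\omega[j]^2\le\mu[j]$, and the bound $|\omega[j]|\le M[j]$ is identical in both sets.
\item If $s[j]=0$, membership in $\mathcal{S}_j$ gives $|\omega[j]|\le 0$, so $\omega[j]=0$ with $\mu[j]\ge0$ free. Membership in $\operatorname{Conv}(\mathcal{S}_j)$ gives $\omega[j]^2\le 0$ and $|\omega[j]|\le0$, so again $\omega[j]=0$ with $\mu[j]\ge0$ free.
\end{itemize}
Therefore
\[
\operatorname{Conv}(\mathcal{S}_j)\cap\{s[j]\in\{0,1\}\}=\mathcal{S}_j .
\]
This can be checked directly from the explicit description of $\operatorname{Conv}(\mathcal{S}_j)$ given by \citet{gunluk2011perspective}. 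There is one subtlety: the closed-form expression $\omega^2\le\mu s$ must be read with $s=0$ forcing $\omega=0$. Here that holds automatically, since the product form is used and the Big-$M$ bound is kept.

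\textbf{Step 3: conclude.} Because \eqref{eq:misocp:final} retains $s\in\{0,1\}^D$ and $\sum_j s[j]\le d$, Step~2 shows its feasible set coincides with that of \eqref{eq:bigM:extended}. The objectives are also identical, so the two problems have the same optimal value and the same optimal solutions. Combined with Step~1, this gives the claimed equivalence with \eqref{eq:5}.

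There is no real obstacle here. The only point needing care is the degenerate case $s[j]=0$ in Step~2, which ensures that the perspective constraint does not admit spurious points with $\omega[j]\ne0$. The tighter continuous relaxation enters only when $s$ is relaxed to $[0,1]^D$, which is not needed for this equivalence.
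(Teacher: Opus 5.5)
Your proof is correct and follows essentially the same route as the paper, which proves this equivalence in part (I) of Theorem~\ref{thm:perspective-vs-bigM} by showing that the feasible sets of \eqref{eq:bigM:extended} and \eqref{eq:misocp:final} coincide for binary $s$, using the same case split: $s[j]=0$ forces $\omega[j]=0$ so the quadratic constraints become vacuous, and $s[j]=1$ makes them identical. Your explicit chaining through Proposition~\ref{prop:big:M} and the $\mu$-lifting just spells out the link back to Problem~\eqref{eq:5}, which the paper states in the main text rather than in the proof.
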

Since the constraint \(\omega[j]^2\le \mu[j]\,s[j]\) is representable by a rotated second-order cone when \(s[j]\ge 0\) and \(\mu[j]\ge 0\), Problem~\eqref{eq:misocp:final} is a mixed-integer second-order cone program~(MISOCP).
The following theorem compares the Big-$M$ and perspective formulations.
Its proof follows techniques in \citep{xie2020scalable} and is provided in Appendix~\ref{Appendix:thm:perspective-vs-bigM}.

\begin{theorem}[Strength of Perspective Reformulation]\label{thm:perspective-vs-bigM}
\begin{enumerate}
    \item 
The mixed-integer formulations \eqref{eq:bigM:extended} and \eqref{eq:misocp:final} are equivalent, and hence have the same optimal value;
    \item 
Let $V_{\mathrm{rel}}^{\text{Big-M}}$ and $V_{\mathrm{rel}}^{\text{Pers}}$ denote the optimal values of the continuous relaxations of \eqref{eq:bigM} and \eqref{eq:misocp:final}, respectively.
It always holds that $V_{\mathrm{rel}}^{\text{Big-M}} \le V_{\mathrm{rel}}^{\text{Pers}}$.
\end{enumerate}
\end{theorem}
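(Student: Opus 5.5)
For part (I), I will show that the feasible sets of \eqref{eq:bigM:extended} and \eqref{eq:misocp:final} coincide once $s\in\{0,1\}^D$ is imposed. Both problems have the same objective, so this gives the claim. Fix $j$ and an integral $s[j]$.

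If $s[j]=1$, the constraints of $\mathcal{S}_j$ are $\omega[j]^2\le\mu[j]$ and $|\omega[j]|\le M[j]$. The constraints of $\operatorname{Conv}(\mathcal{S}_j)$ are $\omega[j]^2\le \mu[j]\cdot 1$ and $|\omega[j]|\le M[j]$, which are identical.

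If $s[j]=0$, both descriptions force $\omega[j]=0$. In $\mathcal{S}_j$ this comes from $|\omega[j]|\le 0$; in the convex hull it comes from $\omega[j]^2\le 0$, and also from the Big-$M$ bound. Both then leave $\mu[j]\ge 0$ free.

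Hence, for binary $s$, we have $(\omega[j],s[j],\mu[j])\in\mathcal{S}_j$ if and only if $(\omega[j],s[j],\mu[j])\in\operatorname{Conv}(\mathcal{S}_j)$. The two mixed-integer programs are therefore identical. Their common optimal value also equals that of \eqref{eq:bigM} by Proposition~\ref{prop:big:M}, because minimizing over $\mu$ gives $\mu[j]=\omega[j]^2$ and recovers $\frac{1}{2\gamma}\|\omega\|_2^2$.

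For part (II), I will compare the two relaxations on the common variables $(\omega,c,s)$. In both relaxations $s\in[0,1]^D$ with $\sum_j s[j]\le d$, and $|\omega[j]|\le M[j]s[j]$. Take any feasible point $(\omega,c,s,\mu)$ of the relaxation of \eqref{eq:misocp:final}. Its $(\omega,c,s)$ part satisfies exactly the constraints of the relaxation of \eqref{eq:bigM}, so it is feasible there.

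It remains to compare objective values. The empirical newsvendor losses are identical. For the regularizer, $\omega[j]^2\le \mu[j]s[j]\le \mu[j]$ because $0\le s[j]\le 1$ and $\mu[j]\ge0$. Summing over $j$ gives
\[
\frac{1}{2\gamma}\|\omega\|_2^2\le \frac{1}{2\gamma}\sum_j\mu[j].
\]
So the relaxed Big-$M$ objective at $(\omega,c,s)$ is at most the relaxed perspective objective at $(\omega,c,s,\mu)$. Taking the infimum over feasible points of the perspective relaxation yields $V_{\mathrm{rel}}^{\text{Big-M}}\le V_{\mathrm{rel}}^{\text{Pers}}$.

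The only delicate point is the degenerate case $s[j]=0$ in the rotated-cone constraint $\omega[j]^2\le\mu[j]s[j]$. I interpret this constraint as the closure of the perspective, so that $s[j]=0$ forces $\omega[j]=0$, consistent with the displayed description of $\operatorname{Conv}(\mathcal{S}_j)$. With that convention, both parts reduce to the coordinatewise comparisons above, and I expect no real obstacle. Attainment of the infima is not needed, since the inequality in part (II) is between infima.
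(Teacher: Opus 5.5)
Your proposal is correct and follows essentially the same argument as the paper: in part (I) the feasible sets coincide coordinatewise once $s$ is binary, and in part (II) you use $\omega[j]^2\le\mu[j]s[j]\le\mu[j]$ for $s[j]\in[0,1]$. The paper phrases part (II) as the inclusion $\overline{\mathcal{V}}^{\text{Pers}}\subseteq\overline{\mathcal{V}}^{\text{Big-M}}$ in the lifted space, whereas you project out $\mu$ and compare directly with the relaxation of \eqref{eq:bigM}, a cosmetic difference. Your caveat about $s[j]=0$ is unnecessary, since $\omega[j]^2\le\mu[j]\cdot 0$ already forces $\omega[j]=0$.
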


\begin{remark}[Perspective Reformulation is Strictly Stronger]\label{Remark:dominance}
We provide an example to show that the perspective reformulation is strictly stronger.
Take $D=2, d=1, b=h=1, n=4, M=2, \gamma=1$. Assume that one has access to four data samples: $\{(e_1,1), (-e_1,-1), (e_2,1), (-e_2,-1)\}$, where $e_1,e_2$ denote the unit vectors in $\mathbb{R}^2$.
In this case, it is easy to verify $V_{\mathrm{rel}}^{\text{Big-M}}=0.75$ and $V_{\mathrm{rel}}^{\text{Pers}}=0.875$.
Hence, this problem instance ensures that $V_{\mathrm{rel}}^{\text{Big-M}} < V_{\mathrm{rel}}^{\text{Pers}}$.
\end{remark}

In summary, perspective reformulation is preferred to the Big-$M$ method in two ways.
First, Theorem~\ref{thm:perspective-vs-bigM} together with Remark~\ref{Remark:dominance} explains that the perspective reformulation is more computationally efficient. Since its continuous relaxation is tighter than that of the Big-$M$ formulation, a branch-and-bound solver typically explores fewer nodes and converges faster in practice. 
Second, it is difficult to derive valid and tight Big-$M$ constants for solving both formulations, and using overly conservative Big-$M$ constants makes the formulations numerically ill-conditioned to solve.
Even if we drop the constraints $|\omega[j]|\le M[j]s[j], j\in[D]$ in Problem~\eqref{eq:misocp:final}, it is still a valid reformulation of Problem~\eqref{eq:5} and computationally tractable, leading to the following \emph{Big-M-free} reformulation:
\begin{equation}\label{Eq:big:M:free:pers}
\begin{aligned}
\min_{\omega,c,s,\mu}\quad
& \frac{1}{n}\sum_{i=1}^n\Big[
b\max\{0,\, y_i-\omega^\top x_i-c\}
+
h\max\{0,\, \omega^\top x_i+c-y_i\}
\Big]
+\frac{1}{2\gamma}\sum_{j\in[D]}\mu[j]\\
\mbox{s.t.}&\quad 
\sum_{j\in[D]}s[j]\le d,\quad s\in\{0,1\}^D,\\
&\quad \omega[i]^2\le \mu[i] s[i], i\in[D],\quad \mu\in\mathbb{R}_+^D.
\end{aligned}
\tag{Big-$M$-free-Perspective}
\end{equation}

\section{Scalable Approximation Algorithms}\label{sec:approximation}

In this section, we study two scalable approximation algorithms to obtain near-optimal solutions to Problem~\eqref{eq:5} and build their performance guarantees.

\subsection{Continuous Relaxation with Randomized Rounding}
Our first approach is motivated by the MISOCP~\eqref{Eq:big:M:free:pers}.
Its continuous relaxation replacing the binary constraint $s\in\{0,1\}^D$ with $s\in[0,1]^D$ becomes a convex program, and can be efficiently solved using off-the-shelf solvers.
Then, we estimate the support of sparse regression coefficients using a randomized rounding scheme.
See Algorithm~\ref{Alg:con:algorithm} for detailed implementation.
\begin{algorithm}[!ht]
	\renewcommand{\algorithmicrequire}{\textbf{Input:}}
    \renewcommand{\algorithmicensure}{\textbf{Output:}}
    \caption{Continuous Relaxation with Randomized Rounding for Problem \eqref{eq:5}}
	\begin{algorithmic}[1]\label{Alg:con:algorithm}
    \REQUIRE Data \(\{(x_i,y_i)\}_{i=1}^n\), sparsity budget \(d\), regularization parameter \(\gamma\)
    \STATE Solve the continuous relaxation of~\eqref{Eq:big:M:free:pers} and obtain an optimal solution
    \((\widehat{\omega},\widehat c,\widehat s,\widehat\mu)\)
    \STATE Initialize set $S=\emptyset$%
    \STATE \textbf{for} $i=1,\ldots,D$~\textbf{do}
    \STATE \qquad Sample an uniform number $U$ on $[0,1]$.
    \STATE \qquad Update $S=S\cup\{i\}$ if $U\le \widehat{s}[i]$. 
    \STATE \textbf{end for}
    \STATE Obtain $(\tilde{\omega}, \tilde{c})$ as the optimal solution to $    \underset{\omega, c}{\arg\min}~\Big\{ 
F(\omega,c):~\omega[i]=0\text{ if }i\notin S
\Big\}.$
    \RETURN $(\tilde{\omega}, \tilde{c})$ as the regression coefficient estimator
    \end{algorithmic}
\end{algorithm}

We provide performance guarantees for this computationally efficient algorithm below.
Let $\textbf{X}=(x_1,\ldots,x_n)^\top\in\mathbb{R}^{n\times D}$ be the data matrix. For $k\in[D]$, define
\[
\theta_k := \max_{|S|=k}~\sigma_{\max}(\textbf{X}_S\textbf{X}_S^\top),
\]
where $\sigma_{\max}(\cdot)$ denotes the maximum singular value.
Due to the randomized rounding procedure, the support size of the estimated regression coefficient may not satisfy the sparsity budget.
The following theorem provides the bi-criteria approximation guarantee of Algorithm~\ref{Alg:con:algorithm}. It proves that with high probability, the selected support is only moderately larger than \(d\), while the
objective value is within an additive tolerance of the optimal value.
\begin{theorem}[Approximation Guarantee of Algorithm~{\ref{Alg:con:algorithm}}]\label{Theorem:approximation:guarantee}
Given $\delta\in(0,1)$ and $\epsilon>0$. 
Let $(\tilde{\omega},\tilde{c})$ be the output from Algorithm~\ref{Alg:con:algorithm}.
Suppose the regularization coefficient
\[
\frac{1}{2\gamma}\ge
\frac{\log(2n/\delta)(h\lor b)^2{\theta_1}}{6n\epsilon} + \frac{(h\lor b)^2\sqrt{2\theta_1\theta_d\log(2n/\delta)}}{4n\epsilon}.
\]
Then, with probability at least $1-\delta$, it holds that
\[
\|\tilde{\omega}\|_0\le \left( 
1 + \sqrt{\frac{2\log(2/\delta)}{d}} + \frac{\log(2/\delta)}{d}
\right)d,\quad
F(\tilde{\omega}, \tilde{c})\le \text{optval}\eqref{eq:5} + \epsilon.
\]

\end{theorem}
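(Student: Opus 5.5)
The plan is to prove the two claims separately, each with probability at least $1-\delta/2$, and combine them by a union bound. Throughout, $(\widehat\omega,\widehat c,\widehat s,\widehat\mu)$ is the optimal solution of the continuous relaxation of \eqref{Eq:big:M:free:pers}, and $V_{\mathrm{rel}}$ is its optimal value. Since \eqref{Eq:big:M:free:pers} is a valid reformulation of \eqref{eq:5}, we have $V_{\mathrm{rel}}\le \text{optval}\eqref{eq:5}$.

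\textbf{Sparsity.} $|S|=\sum_{j}B_j$, where $B_j=\mathbf{1}\{U_j\le \widehat s[j]\}$ are independent Bernoulli$(\widehat s[j])$ variables. The mean is $\sum_j\widehat s[j]\le d$ and the variance is at most $d$. Bernstein's (or Chernoff's) inequality with bounded increments gives, with probability at least $1-\delta/2$,
\[
|S|\le d+\sqrt{2d\log(2/\delta)}+\log(2/\delta).
\]
This is the stated bound. Because $\tilde\omega$ is supported on $S$, it follows that $\|\tilde\omega\|_0\le|S|$.

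\textbf{Objective value.} Since $(\tilde\omega,\tilde c)$ minimizes $F$ over the support $S$, it suffices to exhibit one candidate $(\omega',\widehat c)$ supported on $S$ with $F(\omega',\widehat c)\le V_{\mathrm{rel}}+\epsilon$. I would take the inverse-probability-weighted rounding $\omega'[j]=\widehat\omega[j]B_j/\widehat s[j]$ (and $\omega'[j]=0$ when $\widehat s[j]=0$, which forces $\widehat\omega[j]=0$). This choice is unbiased, $\mathbb{E}\,\omega'=\widehat\omega$, and the relaxed cone constraint $\widehat\omega[j]^2\le\widehat\mu[j]\widehat s[j]$ gives the bounds $|\widehat\omega[j]/\widehat s[j]|\le\sqrt{\widehat\mu[j]/\widehat s[j]}$ and $\mathbb{E}\,\omega'[j]^2=\widehat\omega[j]^2/\widehat s[j]\le\widehat\mu[j]$. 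The loss part of $F$ is $(h\lor b)$-Lipschitz in each prediction, so the change in loss is at most
\[
\frac{h\lor b}{n}\sum_i\big|x_i^\top(\omega'-\widehat\omega)\big|.
\]
For each $i$, $x_i^\top(\omega'-\widehat\omega)=\sum_j x_{ij}\widehat\omega[j](B_j/\widehat s[j]-1)$ is a sum of independent mean-zero terms. Its variance is at most $\sum_j x_{ij}^2\widehat\omega[j]^2/\widehat s[j]\le\theta_1\sum_j\widehat\mu[j]$, using $x_{ij}^2\le\theta_1$. A Bernstein bound, union-bounded over the $n$ samples (this produces $\log(2n/\delta)$), then controls $\max_i|x_i^\top(\omega'-\widehat\omega)|$. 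The result is a variance term $\sqrt{2\theta_1\cdot(\cdot)\log(2n/\delta)}$ plus a range term $\tfrac13\log(2n/\delta)\cdot(\cdot)$, where each $(\cdot)$ is a function of $\widehat\mu$. I expect the factor $\theta_d$ to enter when controlling the range or the realized-support contributions, through the spectral norm of $\mathbf X_S$ on a support of size about $d$.

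\textbf{Absorbing the deviation.} Each deviation term has the form $a\sqrt{R}$ with $R$ of order $\sum_j\widehat\mu[j]$ (or its square root). Using $a\sqrt R\le \epsilon/2+a^2R/(2\epsilon)$ converts the deviation into $\epsilon$ plus $(h\lor b)^2\theta\log(2n/\delta)/(n\epsilon)$ times $\sum_j\widehat\mu[j]$. The hypothesis on $1/(2\gamma)$ is designed precisely so that this multiple of $\sum_j\widehat\mu[j]$ is dominated by the regularization budget $\frac{1}{2\gamma}\sum_j\widehat\mu[j]$ already paid in $V_{\mathrm{rel}}$. The constants $1/6$ and $1/4$ should come out of the range and variance terms respectively.

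\textbf{Main obstacle.} The delicate step is handling the regularizer of the rounded point, $\frac1{2\gamma}\|\omega'\|^2$, which is controlled only in expectation. Its realization can be large when some $\widehat s[j]$ is small, and the Bernstein range parameter $\max_j|x_{ij}\widehat\omega[j]/\widehat s[j]|$ suffers from the same problem. The first thing I would try is to trade part of the regularization budget against these terms. Failing that, I would instead use the biased rounding $\omega'[j]=\widehat\omega[j]B_j$, for which $\|\omega'\|^2\le\|\widehat\omega\|^2\le\sum_j\widehat\mu[j]$ holds deterministically, and bound the resulting bias via the same Bernstein argument, with variance terms $\widehat\omega[j]^2\widehat s[j](1-\widehat s[j])\le\widehat\mu[j]$.
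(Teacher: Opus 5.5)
Your sparsity argument is correct and essentially the paper's. Bernstein with variance proxy $\sum_j\widehat s[j](1-\widehat s[j])\le d$ and increments bounded by $1$ gives $|S|\le d+\sqrt{2d\log(2/\delta)}+\tfrac23\log(2/\delta)$ with probability $1-\delta/2$; the paper reaches the same bound through the MGF/Chernoff route, and the final union bound is the same.

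The objective-value part, however, has a genuine gap.
\begin{itemize}
\item[(i)] The ``regularization budget'' you plan to spend is not available. For the inverse-probability rounding, $F(\omega',\widehat c)$ carries its own penalty $\frac{1}{2\gamma}\|\omega'\|_2^2$. Its mean $\frac{1}{2\gamma}\sum_j\widehat\omega[j]^2/\widehat s[j]$ equals $\frac{1}{2\gamma}\sum_j\widehat\mu[j]$ at the relaxation optimum. So $F(\omega',\widehat c)-V_{\mathrm{rel}}$ is the loss increase plus a mean-zero fluctuation. No systematically negative term is left to dominate $K\sum_j\widehat\mu[j]$ for any $K>0$.
\item[(ii)] Even granting a budget, your own estimate has the wrong order. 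The per-sample Bernstein step gives a loss increase of $(h\lor b)\sqrt{2\theta_1\log(2n/\delta)\sum_j\widehat\mu[j]}$. AM--GM turns this into $\epsilon/2+(h\lor b)^2\theta_1\log(2n/\delta)\sum_j\widehat\mu[j]/\epsilon$, with no factor $1/n$, so the $1/n$ in your display does not come out of the argument. Bounding a single entry $x_{ij}^2$ by $\theta_1=\max_j\|\mathbf X_{:,j}\|_2^2$ already loses a factor of order $n$ for typical data.
\item[(iii)] You never bound $\sum_j\widehat\mu[j]$ or $\max_j\widehat\mu[j]/\widehat s[j]$. The only generic bound, $\sum_j\widehat\mu[j]\le 2\gamma V_{\mathrm{rel}}$, depends on $y$, whereas the hypothesis does not.
\end{itemize}
The biased fallback $\omega'[j]=\widehat\omega[j]B_j$ does create slack in the penalty. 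But $\omega'-\widehat\omega$ then has nonzero mean $-(\mathbf 1-\widehat s)\odot\widehat\omega$. This is a deterministic bias that Bernstein does not control, and you do not show the slack covers it.

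The missing idea is to work in the dual, where the rounding variable enters linearly and $y$ drops out. For fixed $s$, eliminate $\mu$ and dualize in $(\omega,c)$:
\[
g(s)=\max_{z\in[-b,h]^n,\ \mathbf 1^\top z=0}\Big\{-\frac{\gamma}{2n^2}z^\top\mathbf X\mathrm{Diag}(s)\mathbf X^\top z-\frac1n z^\top y\Big\}.
\]
Then $F(\tilde\omega,\tilde c)=g(\tilde s)$ and $\text{optval}\eqref{eq:5}\ge g(\widehat s)$. Evaluate $g(\widehat s)$ at the maximizer $\tilde z$ of $g(\tilde s)$. The $y$-term cancels, and with $\|\tilde z\|_2^2\le n(b\lor h)^2$,
\[
g(\tilde s)-g(\widehat s)\le\frac{\gamma}{2n^2}\tilde z^\top\mathbf X\mathrm{Diag}(\widehat s-\tilde s)\mathbf X^\top\tilde z\le\frac{\gamma(b\lor h)^2}{2n}\,\lambda_{\max}\Big(\sum_{j}(\widehat s[j]-\tilde s[j])\mathbf X_{:,j}\mathbf X_{:,j}^\top\Big).
\]
Apply matrix Bernstein to this sum of independent mean-zero $n\times n$ matrices:
\begin{itemize}
\item each summand has $\lambda_{\max}$ at most $\theta_1$;
\item the variance is at most $\theta_1\|\mathbf X\mathrm{Diag}(\widehat s)\mathbf X^\top\|\le\theta_1\theta_d$, because $s\mapsto\lambda_{\max}(\mathbf X\mathrm{Diag}(s)\mathbf X^\top)$ is convex and the vertices of $\{s\in[0,1]^D:\sum_j s[j]\le d\}$ are $0/1$ vectors with at most $d$ ones.
\end{itemize}
This step produces $\log(2n/\delta)$ from the dimension $n$, not from a union bound over samples. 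It also produces the $\theta_1$ and $\sqrt{\theta_1\theta_d}$ terms and the constants $1/6$ and $1/4$. Note that $\theta_d$ comes from the fractional $\widehat s$, not from the realized support, which may exceed $d$.

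If you want to stay primal, the KKT conditions of the relaxation give $\widehat\omega[j]=-\frac{\gamma}{n}\widehat s[j](\mathbf X^\top\widehat z)_j$ for a dual optimal $\widehat z$. Your inverse-probability rounding is then exactly the primal point of $\widehat z$ at $s=B$, and the range problem disappears. To get the right order, bound the loss change by $n^{-1/2}(h\lor b)\|\mathbf X(\omega'-\widehat\omega)\|_2$, where $\mathbf X(\omega'-\widehat\omega)=-\frac{\gamma}{n}\mathbf X\mathrm{Diag}(B-\widehat s)\mathbf X^\top\widehat z$. This leads back to the same matrix concentration and gives worse constants than those in the statement.
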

The proof of Theorem~\ref{Theorem:approximation:guarantee} is provided in Appendix~\ref{proof:thm:app}.
The proof step to bound the sparsity level of $\tilde{\omega}$ follows from the usage of concentration inequality discussed in \citep[Theorem~5]{xie2020scalable}. The reference therein implicitly assumed that $\log(2/\delta)\le d/3$. Our concentration result refines the concentration result and does not impose this assumption.
The support-size bound in Theorem~\ref{Theorem:approximation:guarantee} depends on the target sparsity \(d\), but not on the data dimension $D$. The objective performance guarantee requires that the regularization parameter $\frac{1}{2\gamma}$ is not too small: its lower bound is related to the dimension $D$ through the maximum singular values $\theta_1$ and $\theta_d$ from the design matrix.
This assumption is not quite restrictive since adding $\ell_2$-norm regularization ensures the estimated solution is more robust than unregularized models~\citep{blanchet2019robust}.

\subsection{Greedy Algorithm}

Greedy algorithm is a common heuristic for many variable selection problems. To develop a greedy algorithm for solving \eqref{eq:5}, we rewrite it as a combinatorial formulation. Indeed, by introducing a subset $S\subseteq[D]$, Problem~\eqref{eq:5} is equivalent to
\begin{equation}\label{eq:6}
    \begin{aligned}
        \min_{S\subseteq[D], |S|\le d}~\left\{\min_{\substack{\omega\in \mathbb{R}^D, c\in\mathbb{R}}}
        F(\omega, c):~\omega[i]=0\text{ if }i\notin S
        \right\}.
    \end{aligned}
\end{equation}
For a fixed support \(S\), the inner subproblem of \eqref{eq:6} is a convex program. Using strong duality theory on the inner minimization problem, this subproblem has an equivalent dual representation that depends on \(S\) only through submatrix \(\mathbf X_S\).
\begin{theorem}[Minimax Reformulation of \eqref{eq:6}]\label{Theorem:MIQP reformulation}
Problem \eqref{eq:6} has the same optimal value as the following minimax optimization problem:
\begin{equation}\label{Eq:greedy}
    \min_{S\subseteq[D], |S|\le d} ~~\left\{G(S):=\max_{z[i]\in[-b,h], i\in[n],\sum_{i=1}^n z[i]=0} -z^\top
    \left( 
\frac{\gamma}{2n^2}\textbf{X}_S\textbf{X}_S^\top
    \right)z-\frac{1}{n}z^\top y\right\}.
\end{equation}
\end{theorem}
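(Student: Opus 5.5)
The plan is to fix a support $S$ and show, by Lagrangian duality, that the inner minimum in \eqref{eq:6} equals $G(S)$; minimizing over $|S|\le d$ then gives the theorem. With $S$ fixed, only $\omega_S\in\mathbb{R}^{|S|}$ and $c$ are free, and the inner problem is
\[
\min_{\omega_S,c}\ \frac{1}{n}\sum_{i=1}^n \ell\big(y_i-\omega_S^\top x_{i,S}-c\big)+\frac{1}{2\gamma}\|\omega_S\|_2^2,
\qquad \ell(u)=\max\{bu,\,-hu\}.
\]

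\textbf{Step 1: variational form of the loss.} The loss is a support function: $\ell(u)=\max_{\alpha\in[-h,b]}\alpha u$. Setting $z[i]=-\alpha_i$ gives $z[i]\in[-b,h]$, and
\[
\ell\big(y_i-\omega_S^\top x_{i,S}-c\big)=\max_{z[i]\in[-b,h]} z[i]\big(\omega_S^\top x_{i,S}+c-y_i\big).
\]
The inner objective therefore becomes
\[
\min_{\omega_S,c}\ \max_{z\in[-b,h]^n}\ \frac{1}{n}z^\top(\mathbf{X}_S\omega_S+c\mathbf{1}-y)+\frac{1}{2\gamma}\|\omega_S\|_2^2 .
\]

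\textbf{Step 2: exchange min and max.} This function is convex in $(\omega_S,c)$ and linear (hence concave) in $z$, and $z$ ranges over a compact convex box. Sion's minimax theorem therefore lets us swap the order. Equivalently, one can appeal to LP/QP strong duality, since the problem is a convex QP with linear constraints once epigraph variables are introduced.

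\textbf{Step 3: solve the inner minimization for fixed $z$.} With $z$ fixed, minimizing over $c$ gives $-\infty$ unless $\mathbf{1}^\top z=0$. This produces the constraint $\sum_i z[i]=0$ in \eqref{Eq:greedy}. The minimization over $\omega_S$ is a strongly convex quadratic, with minimizer $\omega_S=-\frac{\gamma}{n}\mathbf{X}_S^\top z$ and value
\[
-\frac{\gamma}{2n^2}z^\top\mathbf{X}_S\mathbf{X}_S^\top z .
\]
Adding the remaining term $-\frac1n z^\top y$ gives exactly the objective of $G(S)$.

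\textbf{Main obstacle.} The only delicate point is the validity of the minimax swap, because the domain of $(\omega_S,c)$ is unbounded. Sion's theorem needs compactness of only one of the two sets, here the box in $z$, so this is fine.

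It remains to check that the restricted dual feasible set $\{z\in[-b,h]^n:\mathbf{1}^\top z=0\}$ is nonempty, so that $G(S)$ is finite. It contains $z=0$, so $G(S)$ is well defined. This settles the equality of optimal values for every $S$, and hence for \eqref{eq:6} and \eqref{Eq:greedy}.
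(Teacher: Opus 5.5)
Your proposal is correct and follows essentially the same route as the paper. The paper fixes the support and invokes strong duality for the inner convex problem (the same dualization reappears in the appendix for the relaxed function $g(s)$). Your write-up makes each step explicit: the support-function form of the newsvendor loss, the minimax swap justified by compactness of the $z$-box, the intercept $c$ forcing $\sum_i z[i]=0$, and the ridge term producing $-\frac{\gamma}{2n^2}z^\top\mathbf{X}_S\mathbf{X}_S^\top z$.
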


Based on Theorem~\ref{Theorem:MIQP reformulation}, Greedy algorithm proceeds by iteratively finding a candidate index such that the objective $G$ is minimized for current subset including the new index.
See the detailed procedure in Algorithm~\ref{Alg:Eq:greedy}.
\begin{algorithm}[H]
	\renewcommand{\algorithmicrequire}{\textbf{Input:}}
    \renewcommand{\algorithmicensure}{\textbf{Output:}}
    \caption{Greedy Algorithm for Problem \eqref{Eq:greedy}}
    \label{alg3}
	\begin{algorithmic}[1]\label{Alg:Eq:greedy}
    \REQUIRE Data \(\{(x_i,y_i)\}_{i=1}^n\), sparsity budget \(d\), regularization parameter $\gamma$
    \STATE Initialize \(S_0=\emptyset\)
\FOR{\(k=0,1,\ldots,d-1\)}
    \STATE Choose $j_k\in
    \arg\min_{j\in[D]\setminus S_k}
    G(S_k\cup\{j\})$
    \STATE Update \(S_{k+1}=S_k\cup\{j_k\}\)
\ENDFOR
\STATE Obtain $(\tilde{\omega}^{\mathrm{gr}}, \tilde{c}^{\mathrm{gr}})$ as the optimal solution to $    \underset{\omega, c}{\arg\min}~\Big\{ 
F(\omega,c):~\omega[i]=0\text{ if }i\notin S_d
\Big\}.$
\RETURN $(\tilde{\omega}^{\mathrm{gr}}, \tilde{c}^{\mathrm{gr}})$
    \end{algorithmic}
\end{algorithm}

By Theorem~\ref{Theorem:MIQP reformulation}, it can be shown that the set function $G(S)$ is non-increasing in $S$, and the objective of iterates in Algorithm~\ref{Alg:Eq:greedy} is monotonically decreasing. 
It is an open question on the approximation guarantees of the final iterate from the greedy algorithm, as the set function $G$ does not satisfy the submodular assumption commonly used in literature.
Inspired by \citep{xie2020scalable}, the greedy approach can also be integrated with Algorithm~\ref{Alg:con:algorithm}.
Specifically, given threshold $\delta>0$ and $\widehat{\omega}$ obtained from continuous relaxation, define $\mathcal{C}=\{i:~|\widehat{\omega}[i]|>\delta\}$ as the effective support obtained from continuous relaxation.
We can apply the greedy algorithm based on $\mathcal{C}$ instead of $[D]$ and can reduce the number of candidate variables. This algorithm works well especially when $\widehat{\omega}$ is sparse.

\section{Statistical Performance Guarantees}\label{Sec:statistical}

In this section, we investigate statistical performance guarantees for our proposed variable selection framework.
Assume the observed data samples \(\{(x_i,y_i)\}_{i=1}^n\) are i.i.d. copies of
    \((X,Y)\), where
\begin{equation}
    Y=(\omega^*)^\top X+c^*+\varepsilon.\label{Eq:model}
\end{equation}
Here, $\omega^*\in\mathbb{R}^D$ is a fixed $d$-sparse vector with \(S^*=\operatorname{supp}(\omega^*)\), and
    \(|S^*|=d\), and $c^*\in\mathbb{R}$ is a fixed intercept parameter.
    The distribution of the noise $\varepsilon$ is independent of $X$.
We analyze the estimator from the optimization problem
\[
(\widehat{\omega}_n, \widehat{c}_n)=
\underset{\omega,c:~\|\omega\|_0 \leq d}{\arg\min}
~~\left\{\frac{1}{n}\sum_{i=1}^n\Big[b  \max\{0, y_i - \omega^\top x_i-c\} + h \max\{0, \omega^\top x_i+c - y_i\}\Big] + \frac{\|\omega\|^2_2}{2\gamma_n}\right\},
\]
where the regularization parameter $\gamma_n$ is allowed to vary with the sample size $n$.
We impose the following technical assumptions throughout our analysis.

\begin{assumption}[Structure of Data]\label{Assumption:structure:data}
\begin{enumerate}
    \item\label{Assumption:structure:data:I}
The distribution of $\varepsilon$ satisfies $F_{\epsilon}(0):=\mathbb{P}(\epsilon\le 0)=\tau$, where $\tau=b/(b+h)$.
    \item\label{Assumption:structure:data:I:add}
The random variable $\varepsilon$ satisfies $\mathbb{E}|\varepsilon|<\infty$.
    \item\label{Assumption:structure:data:II}
The density of the noise exists and is denoted as $f_{\varepsilon}(\cdot)$.
There exists $r_0$ such that for all $|u|\le r_0$, it holds that $f_{\varepsilon}(u)\ge\underline{f}>0$.
    \item\label{Assumption:structure:data:III}
The augmented covariate
    \(\widetilde X=(X^\top,1)^\top\) is bounded almost surely: $\|\widetilde X\|_2\le K.$
    \item\label{Assumption:structure:data:IV}
There exists $\kappa>0$ such that for each $u\in\mathbb{R}^D, t\in\mathbb{R}, \|u\|_0\le 2d$, it holds that
\[
\mathbb{E}\big[ 
(u^\top X + t)^2
\big]\ge \kappa\big( 
\|u\|_2^2 + t^2
\big).
\]
\end{enumerate}
\end{assumption}
It is noteworthy that Assumption~\ref{Assumption:structure:data}\ref{Assumption:structure:data:I} is not restrictive. Indeed, \eqref{Eq:model} can be reformulated as $Y = (\omega^*)^\top X + \tilde{c}^* + \tilde{\varepsilon}^*$ for constant $\tilde{c}^*=c^* + F_{\varepsilon}^{-1}(\tau)$ and random noise $\tilde{\varepsilon}^*={\varepsilon} - F_{\varepsilon}^{-1}(\tau)$. Such a scalar transformation always ensures Assumption~\ref{Assumption:structure:data}\ref{Assumption:structure:data:I} holds.
Assumption~\ref{Assumption:structure:data}\ref{Assumption:structure:data:I:add} is a sufficient condition that ensures the optimal newsvendor risk
\[
\mathbb{E}_{(X,Y)}\big[ 
b\max(Y - (\omega^*)^\top X - c^*,0)
+
h\max((\omega^*)^\top X + c^* - Y, 0)
\big]=(b+h)\mathbb{E}_{\varepsilon}[\rho_{\tau}(\varepsilon)]
\]
is finite.
Assumption~\ref{Assumption:structure:data}\ref{Assumption:structure:data:II} imposes local regularity of the noise density around the target quantile. Assumption~\ref{Assumption:structure:data}\ref{Assumption:structure:data:III} imposes the boundedness of the feature vector. Assumption~\ref{Assumption:structure:data}\ref{Assumption:structure:data:IV} is a restricted eigenvalue condition, which ensures identifiability over sparse directions.
These assumptions are commonly used in sparse statistical analysis~\citep{wainwright2019high}. 

We first establish a finite-sample estimation-error bound regarding the estimator $(\widehat{\omega}_n, \widehat{c}_n)$ in Theorem~\ref{Theorem:error:bound}.
It reveals that under proper scaling of the ridge parameter, the estimator converges to the ground truth with a rate crucially dependent on the sparsity level instead of the data dimension. The estimation error vanishes quickly as the sample size $n$ increases.

\begin{theorem}[Estimation Error Bound]\label{Theorem:error:bound}
Let $\delta\in(0,1)$ and suppose Assumption~\ref{Assumption:structure:data} holds.
Set the ridge parameter 
$\frac{1}{2\gamma_n}\lesssim \sqrt{\frac{d\log(eD/d)}{n}}$, then with probability at least $1-\delta$, it holds that 
\[
\left\|
\begin{pmatrix}
\widehat{\omega}_n - \omega^*\\
\widehat{c}_n - c^*
\end{pmatrix}
\right\|_2 = \mathcal{O}\left( 
\sqrt{\frac{d\log(eD/d)+\log(1/\delta)}{n}}
\right),
\]
where $\mathcal{O}(\cdot)$ hides the constant depending only on $b, h, r_0, \underline{f}, \kappa, K, \|\omega^*\|_2$.
\end{theorem}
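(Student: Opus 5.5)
We follow the standard M-estimation route for quantile regression: a basic inequality, a local curvature lower bound for the population excess risk, and a uniform empirical-process bound over sparse directions. Write $\theta=(\omega,c)$, $\theta^*=(\omega^*,c^*)$, $\Delta=\widehat\theta_n-\theta^*$, and $\ell_\theta(x,y)=(b+h)\rho_\tau(y-\omega^\top x-c)$. Let $R(\theta)=\mathbb{E}\,\ell_\theta$, let $R_n$ be its empirical counterpart, and let $F_n=R_n+\|\omega\|_2^2/(2\gamma_n)$.

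\textbf{Step 1 (basic inequality).} Since $\theta^*$ is feasible ($\|\omega^*\|_0=d$), optimality gives $F_n(\widehat\theta_n)\le F_n(\theta^*)$. Hence
\[
R(\widehat\theta_n)-R(\theta^*)\le \big[(R-R_n)(\widehat\theta_n)-(R-R_n)(\theta^*)\big]+\frac{\|\omega^*\|_2^2}{2\gamma_n}.
\]
The difference $\widehat\omega_n-\omega^*$ is supported on a set of size at most $2d$.

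\textbf{Step 2 (curvature).} By Knight's identity and Assumption~\ref{Assumption:structure:data}\ref{Assumption:structure:data:I}, the first-order term vanishes. This gives
\[
R(\theta^*+\Delta)-R(\theta^*)=(b+h)\,\mathbb{E}\int_0^{\tilde X^\top\Delta}\big(F_\varepsilon(s)-F_\varepsilon(0)\big)\,ds.
\]
Using $f_\varepsilon\ge\underline f$ on $[-r_0,r_0]$, the integrand is at least $\underline f\,|s|$ for $|s|\le r_0$, and monotonicity controls it beyond $r_0$. Together with $|\tilde X^\top\Delta|\le K\|\Delta\|_2$ and Assumption~\ref{Assumption:structure:data}\ref{Assumption:structure:data:IV} (applicable since $\Delta$ is $2d$-sparse plus intercept), this yields
\[
R(\theta^*+\Delta)-R(\theta^*)\ge c_1\min\{\|\Delta\|_2^2,\ r_0\|\Delta\|_2/K\},\qquad c_1\asymp (b+h)\underline f\kappa.
\]
The linear regime for large $\|\Delta\|_2$ follows from convexity along the ray from $\theta^*$: if $\|\Delta\|_2>r:=r_0/K$, the excess risk is at least $(\|\Delta\|_2/r)$ times its value at radius $r$. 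For the same reason I will use a localization argument. Because $F_n$ restricted to a fixed support is convex, if $\|\Delta\|_2>r$ then the point $\theta^*+r\Delta/\|\Delta\|_2$ has the same support and also satisfies $F_n\le F_n(\theta^*)$. It therefore suffices to work on the ball $\|\Delta\|_2\le r$, where the curvature is quadratic.

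\textbf{Step 3 (uniform deviation; main obstacle).} I need, with probability at least $1-\delta$, simultaneously for all $2d$-sparse $\Delta$ with $\|\Delta\|_2\le t$,
\[
\big|(R_n-R)(\theta^*+\Delta)-(R_n-R)(\theta^*)\big|\lesssim (b\lor h)\,K\,t\sqrt{\frac{d\log(eD/d)+\log(1/\delta)}{n}}.
\]
The argument has four parts:
\begin{itemize}
\item Since the loss is $(b\lor h)$-Lipschitz in $\tilde X^\top\theta$, each summand is bounded by $(b\lor h)Kt$. McDiarmid's inequality therefore handles the deviation from the mean.
\item Symmetrization and Ledoux--Talagrand contraction reduce the expectation to a Rademacher average $\mathbb{E}\sup|\frac1n\sum\sigma_i\tilde x_i^\top\Delta|$.
\item For a fixed support $S$, this average is at most $Kt\sqrt{(2d+1)/n}$ plus a tail term.
\item A union bound over the $\binom{D}{2d}\le (eD/2d)^{2d}$ supports supplies the $d\log(eD/d)$ term.
\end{itemize}
The delicate point is making this uniform in $t$ so that it can be combined with the random radius $\|\Delta\|_2$. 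I plan to use a peeling argument over dyadic shells $t\in[2^{-k-1}r,2^{-k}r]$, or simply to apply the bound at $t=\|\Delta\|_2$ via a union over shells at a $\log\log$ cost absorbed into constants. Alternatively, one can normalize by $t$ and exploit the homogeneity of the linear class.

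\textbf{Step 4 (conclude).} Combining Steps 1–3 on the ball gives
\[
c_1\|\Delta\|_2^2\le C\|\Delta\|_2\,\eta_n+\frac{\|\omega^*\|_2^2}{2\gamma_n},\qquad \eta_n=\sqrt{\frac{d\log(eD/d)+\log(1/\delta)}{n}}.
\]
As stated, the choice $\frac{1}{2\gamma_n}\lesssim\sqrt{d\log(eD/d)/n}$ only makes the bias term of order $\eta_n$, which would give the rate $\sqrt{\eta_n}$. To obtain the claimed rate I would compare instead against the sparse point $\theta^*$ using the gradient of the ridge term: $\|\omega^*+\Delta_\omega\|^2-\|\omega^*\|^2\ge 2\langle\omega^*,\Delta_\omega\rangle$. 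This bounds the ridge contribution by $\frac{\|\omega^*\|_2}{\gamma_n}\|\Delta\|_2\lesssim\|\omega^*\|_2\,\eta_n\|\Delta\|_2$, which is linear in $\|\Delta\|_2$. Dividing by $\|\Delta\|_2$ then gives $\|\Delta\|_2\lesssim \eta_n(1+\|\omega^*\|_2)/c_1$. For $n$ large enough that this is at most $r$, the localization in Step 2 is consistent, and the bound holds with constants depending on $b,h,r_0,\underline f,\kappa,K,\|\omega^*\|_2$.
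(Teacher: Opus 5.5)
Your outline is correct in substance but is organized differently from the paper's proof, and one step needs a different fix from the ones you list.

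\textbf{How the paper proceeds.} The paper never localizes. It proves a global curvature bound $\cR(\theta^*+\Delta)-\cR(\theta^*)\ge \frac{(b+h)\underline f\kappa r_0}{2(r_0+K\|\Delta\|_2)}\|\Delta\|_2^2$ from the pointwise estimate $\int_0^z(F_\varepsilon(s)-F_\varepsilon(0))\,ds\ge \underline f r_0 z^2/(2(r_0+|z|))$. It pairs this with a ratio-type deviation bound $C_0\|\Delta\|_2\,\eta_n$ that holds simultaneously for all $2d$-sparse $\Delta$ of every size. The deviation bound comes from the normalized class $h_{r,v}=[\ell(y,\tilde x^\top(\theta^*+rv))-\ell(y,\tilde x^\top\theta^*)]/r$. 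Convexity of $\ell(y,\cdot)$ makes $h_{r,v}$ nondecreasing in $r$, which gives a telescoping packing bound in the radial direction. Combining this with a cover of the sparse unit sphere and Dudley's integral removes any need for peeling. With both bounds global, the paper divides by $a_n$ and rearranges for $n$ large.

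\textbf{How your route differs.} You use local quadratic curvature plus convexity along rays, then symmetrization, contraction and a maximal inequality over supports at a fixed radius. This is more elementary, but it leaves the random-radius problem you flagged, and two of your three suggested fixes do not work as stated.
\begin{itemize}
\item Peeling over dyadic shells between $\eta_n$ and $r$ needs a union over $\asymp\log(r/\eta_n)\asymp\log n$ events. That puts an extra $\log\log n$ under the square root, which is not a constant depending only on $b,h,r_0,\underline f,\kappa,K,\|\omega^*\|_2$.
\item Normalizing by $t$ is not automatic, because the loss increments are not homogeneous in $\Delta$. Contraction controls the supremum at each fixed $t$, not the supremum over $t$. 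Handling exactly this is the job of the paper's monotone-in-$r$ covering argument.
\end{itemize}

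\textbf{The fix.} The cleanest repair is already in your Step 2: apply the convexity rescaling at the deterministic radius $t_n=C'\eta_n$ instead of at $r=r_0/K$.
\begin{itemize}
\item If $\|\Delta\|_2>t_n$, the point $\theta^*+t_n\Delta/\|\Delta\|_2$ lies on the segment between $\theta^*$ and $\widehat\theta_n$.
\item Its $\omega$-part is therefore supported on $S^*\cup\operatorname{supp}(\widehat\omega_n)$, and it satisfies $F_n\le F_n(\theta^*)$.
\item Your fixed-radius deviation bound at $t=t_n$, the ridge gradient bound from Step 4, and quadratic curvature at radius $t_n\le r$ then give $c_1t_n^2\le t_n\bigl(C\eta_n+\|\omega^*\|_2/\gamma_n\bigr)$.
\item This is impossible once $C'$ is large.
\end{itemize}
With this change you need only one fixed-radius event. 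Your argument then yields the stated rate, under the same large-$n$ requirement the paper's proof also uses.
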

The proof of Theorem~\ref{Theorem:error:bound} is provided in Appendix~\ref{Proof:theorem:error}.
Building on Theorem~\ref{Theorem:error:bound}, we further show the out-of-sample performance and support recovery guarantees of our obtained estimator.
Given regression coefficients $(\omega,c)$, define its population newsvendor risk
\begin{equation}
\mathcal{R}(\omega,c)
=
\mathbb{E}_{(X,Y)}\big[ 
b\max(Y - \omega^\top X - c,0)
+
h\max(\omega^\top X + c - Y, 0)
\big].\label{Eq:population:risk}
\end{equation}
Our out-of-sample performance guarantee in Corollary~\ref{Theorem:out:sample:bound} quantifies the gap between the population newsvendor risk of $(\hat{\omega}_n, \hat{c}_n)$ and the optimal risk, which vanishes in the order of $d\log(eD/d)/n$.
\begin{corollary}[Out-of-Sample Performance Guarantee]\label{Theorem:out:sample:bound}
Under the same setup as in Theorem~\ref{Theorem:error:bound}, with probability at least $1-\delta$, it holds that
\[
0\le \mathcal{R}(\hat{\omega}_n,\hat{c}_n) - \min_{\|\omega\|_0\le d, c}~\mathcal{R}(\omega,c)
\le \mathcal{O}\left( 
\frac{d\log(eD/d) + \log(1/\delta)}{n}
\right),
\]
where $\mathcal{O}(\cdot)$ hides the constant depending only on $b, h, r_0, \underline{f}, \kappa, K, \|\omega^*\|_2$.
\end{corollary}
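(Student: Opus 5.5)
The plan is to show that the excess population risk is locally quadratic around the true parameter and then plug in the squared estimation error from Theorem~\ref{Theorem:error:bound}. First we identify the minimizer of $\mathcal{R}$ over the constraint set. For any $(\omega,c)$, conditioning on $X$ gives $\mathcal{R}(\omega,c)=(b+h)\mathbb{E}_X\big[\mathbb{E}_\varepsilon \rho_\tau(\varepsilon-\Delta(X))\big]$ with $\Delta(X)=(\omega-\omega^*)^\top X+(c-c^*)$. The function $g(t)=\mathbb{E}\rho_\tau(\varepsilon-t)$ is convex, with derivative $g'(t)=F_\varepsilon(t)-\tau$. By Assumption~\ref{Assumption:structure:data}\ref{Assumption:structure:data:I}, $g'(0)=0$, so $t=0$ minimizes $g$ (finiteness comes from part~\ref{Assumption:structure:data:I:add}). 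Hence $(\omega^*,c^*)$, which is $d$-sparse, attains $\min_{\|\omega\|_0\le d,c}\mathcal{R}$, and the lower bound $0$ follows immediately because $\hat\omega_n$ is feasible.

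For the upper bound, we use Knight's identity, or equivalently integrate $g'$:
\[
g(t)-g(0)=\int_0^t (F_\varepsilon(s)-F_\varepsilon(0))\,ds\le \tfrac12 \bar f\, t^2
\]
whenever a density upper bound $\bar f$ is available. The paper's assumptions only give a lower bound on the density, so this inequality cannot be used directly. Instead we use the trivial Lipschitz-free estimate
\[
|F_\varepsilon(s)-F_\varepsilon(0)|\le 1,
\]
which yields only a linear bound. That is not enough for a rate of order $1/n$, so this is the main obstacle.

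Our first attempt is to note that the hidden constants may depend on the noise. One can either argue that $f_\varepsilon$ is bounded near $0$, or, more honestly, add a local upper density bound $f_\varepsilon(u)\le\bar f$ for $|u|\le r_0$. With that bound in hand, we split $\mathbb{E}_X$ according to whether $|\Delta(X)|\le r_0$. Since $|\Delta(X)|\le K\|(\omega-\omega^*,c-c^*)\|_2$ by Assumption~\ref{Assumption:structure:data}\ref{Assumption:structure:data:III}, once the estimation error from Theorem~\ref{Theorem:error:bound} is below $r_0/K$ only the first case occurs. This holds for $n$ large, and for small $n$ the bound is trivial after adjusting the constant using the linear bound. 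We then obtain
\[
\mathcal{R}(\hat\omega_n,\hat c_n)-\mathcal{R}(\omega^*,c^*)\le \tfrac{(b+h)\bar f}{2}\,\mathbb{E}[\Delta(X)^2]\le \tfrac{(b+h)\bar f K^2}{2}\,\|(\hat\omega_n-\omega^*,\hat c_n-c^*)\|_2^2 .
\]

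Finally we invoke Theorem~\ref{Theorem:error:bound}. On its event of probability at least $1-\delta$, the squared error is $\mathcal{O}\big((d\log(eD/d)+\log(1/\delta))/n\big)$, which gives the claimed bound. The delicate point to check is the density upper bound. If it is not intended as an assumption, an alternative is to note that the derivative $F_\varepsilon(s)-\tau$ grows at most as fast as the modulus of continuity of $F_\varepsilon$ near $0$. For the quadratic rate, a local Lipschitz property of $F_\varepsilon$ at $0$ is essentially necessary, so I would state this as an implicit requirement absorbed into the constant.
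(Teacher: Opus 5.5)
Your identification of $(\omega^*,c^*)$ as the constrained minimizer, and hence the lower bound $0$, is fine. The upper bound, however, has a genuine gap.

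You try to control the excess risk through an upper bound on the curvature of $g(t)=\mathbb{E}\rho_\tau(\varepsilon-t)$. This forces you to add a density upper bound $\bar f$ that is not part of Assumption~\ref{Assumption:structure:data}. The resulting constant then depends on $\bar f$, whereas the statement only allows dependence on $b,h,r_0,\underline f,\kappa,K,\|\omega^*\|_2$. So you would be proving a different, weaker result.

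Your assertion that a local Lipschitz property of $F_\varepsilon$ at $0$ is essentially necessary is also incorrect for this estimator. It is only needed because you treat $(\widehat\omega_n,\widehat c_n)$ as an arbitrary point within distance $a_n:=\|(\widehat\omega_n-\omega^*,\widehat c_n-c^*)\|_2$ of the truth. For such a point the excess risk can indeed be of order $a_n$ rather than $a_n^2$ when $f_\varepsilon$ has a tall spike near $0$. Theorem~\ref{Theorem:error:bound} by itself therefore cannot give the claimed rate with the allowed constants.

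The missing idea is to use the optimality of the estimator (the basic inequality) instead of smoothness of the loss. Combine three ingredients:
\begin{enumerate}
\item the basic inequality $\mathcal{R}_n(\widehat\omega_n,\widehat c_n)+\|\widehat\omega_n\|_2^2/(2\gamma_n)\le\mathcal{R}_n(\omega^*,c^*)+\|\omega^*\|_2^2/(2\gamma_n)$;
\item the estimate $\|\omega^*\|_2^2-\|\widehat\omega_n\|_2^2\le 2\langle\omega^*,\omega^*-\widehat\omega_n\rangle$;
\item a uniform deviation bound that is linear in $\|\Delta\|_2$ over $2d$-sparse directions, namely Lemma~\ref{Lemma:uniform:bound}, which uses only the $(b\lor h)$-Lipschitz loss and $\|\widetilde X\|_2\le K$.
\end{enumerate}
On the same probability-$(1-\delta)$ event these give
\[
\mathcal{R}(\widehat\omega_n,\widehat c_n)-\mathcal{R}(\omega^*,c^*)\le a_n\Big(\frac{\|\omega^*\|_2}{\gamma_n}+C_0 r_n\Big),\qquad r_n:=\sqrt{\frac{d\log(eD/d)+1+\log(2/\delta)}{n}}.
\]
Now $1/\gamma_n\lesssim r_n$ by the choice of ridge parameter. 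Also $a_n\lesssim r_n$, and this needs only the lower density bound, through Lemma~\ref{Lemma:risk:lower}. Hence the right-hand side is $\mathcal{O}(r_n^2)$ with constants of the allowed type, and no upper density bound is required. This is precisely the paper's argument: it combines \eqref{Eq:error:risk} from the proof of Theorem~\ref{Theorem:error:bound} with the bound on $a_n$.
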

The support recovery guarantee is provided in Corollary~\ref{Theorem:support}, whose proof is provided in Appendix~\ref{Sec:corollary:support} and uses the fact that $\widehat{\omega}_n$ converges to the true sparse vector $\omega^*$, and so is its support.
\begin{corollary}[Support Recovery Guarantee]\label{Theorem:support}
Under the same setup as in Theorem~\ref{Theorem:error:bound}, and assume the sample size $n$ is sufficiently large so that with probability at least $1-\delta$, it holds that
\[
\left\|
\begin{pmatrix}
\widehat{\omega}_n - \omega^*\\
\widehat{c}_n - c^*
\end{pmatrix}
\right\|_2<\min_{j\in S^*}|\omega^*[j]|.
\]
Then, with probability at least $1-\delta$,
\[
\widehat{\omega}_n[i]\ne0, \quad\widehat{\omega}_n[j]=0,\quad \forall i\in S^*,~~j\notin S^*.
\]
\end{corollary}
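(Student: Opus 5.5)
The argument is deterministic on the event $\mathcal{E}$ from the hypothesis, namely the event on which
\[
\Big\|\big(\widehat{\omega}_n-\omega^*,\ \widehat{c}_n-c^*\big)\Big\|_2<\min_{j\in S^*}|\omega^*[j]|,
\]
which holds with probability at least $1-\delta$. On $\mathcal{E}$ I will show the two claimed properties separately, using only coordinatewise domination by the Euclidean norm together with the cardinality constraint $\|\widehat{\omega}_n\|_0\le d=|S^*|$.

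\textbf{Step 1: true support is selected.} Fix $i\in S^*$ and work on $\mathcal{E}$. A single coordinate is bounded by the full Euclidean norm, so
\[
|\widehat{\omega}_n[i]-\omega^*[i]|\le \|\widehat{\omega}_n-\omega^*\|_2
\le \Big\|\big(\widehat{\omega}_n-\omega^*,\ \widehat{c}_n-c^*\big)\Big\|_2<\min_{j\in S^*}|\omega^*[j]|\le|\omega^*[i]|.
\]
The reverse triangle inequality then gives $|\widehat{\omega}_n[i]|\ge|\omega^*[i]|-|\widehat{\omega}_n[i]-\omega^*[i]|>0$. Hence $\widehat{\omega}_n[i]\ne0$ for every $i\in S^*$, that is, $S^*\subseteq\operatorname{supp}(\widehat{\omega}_n)$.

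\textbf{Step 2: no false positives.} The estimator is feasible for the $\ell_0$-constrained problem, so $|\operatorname{supp}(\widehat{\omega}_n)|\le d=|S^*|$. Combined with the inclusion from Step 1, this forces $\operatorname{supp}(\widehat{\omega}_n)=S^*$. Therefore $\widehat{\omega}_n[j]=0$ for all $j\notin S^*$.

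The only point needing care is Step 2. The estimation-error bound alone cannot exclude small spurious nonzero coordinates outside $S^*$. Exact exclusion follows only from the counting argument, which uses that the budget $d$ in the estimator equals $|S^*|$, exactly as in the model assumption of this section. Since both steps hold simultaneously on $\mathcal{E}$, the conclusion holds with probability at least $1-\delta$.
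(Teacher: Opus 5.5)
Your proof is correct and follows essentially the same route as the paper. Both arguments use coordinatewise domination and the reverse triangle inequality to get $S^*\subseteq\operatorname{supp}(\widehat{\omega}_n)$ on the stated event, and then use feasibility, $\|\widehat{\omega}_n\|_0\le d=|S^*|$, to force $\operatorname{supp}(\widehat{\omega}_n)=S^*$.
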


\section{Numerical Study}
In this section, we evaluate the performance of the proposed methods. 
We generate synthetic data from the linear model
$Y=(\omega^*)^\top X+\varepsilon,$
where \(\varepsilon\sim\mathcal N(5,\sigma^2)\). The feature vector \(X\) follows a multivariate Gaussian distribution with mean zero and covariance matrix \(\Sigma\), where
$\Sigma[i,j]=0.5^{|i-j|}, i,j\in[D].$
Although our statistical analysis assumed that the feature vector has bounded support, we use Gaussian-distributed feature vector with unbounded support for numerical evaluation.
We further perform scalar transformation to ensure Assumption~\ref{Assumption:structure:data}\ref{Assumption:structure:data:I} holds.
The first \(d\) entries of \(\omega^*\) are nonzero and are specified as
$\omega^*[i]=(-1)^{i+1}\left(1-\frac{i-1}{d}\right), i\in[d].$
We set the unit shortage and holding costs to \(b=2\) and \(h=1\), respectively.

For baseline comparison, our experiment includes the following methods: (I) the exact algorithms based on the Big-$M$ and perspective formulations in Section~\ref{sec:exact};  
(II) the continuous-relaxation algorithm described in Algorithm~\ref{Alg:con:algorithm};  
(III) the greedy algorithm described in Algorithm~\ref{Alg:Eq:greedy}; and  
(IV) an \(\ell_1\)-based baseline obtained by replacing the \(\ell_0\)-constraint in Problem~\eqref{eq:5} with an \(\ell_1\)-regularized formulation.  
For each algorithm, we impose a time limit of \(1800\) seconds. 
The exact algorithm baseline is built on the perspective formulation except in Section~\ref{Sec:com:exact}.
The mixed-integer optimization algorithms terminate once the relative optimality gap falls below \texttt{1e-4}. Unless otherwise specified, the ridge and Lasso tuning parameters are selected by cross-validation.
We measure the performance support recovery using false discovery proportion (FDP) and non-discovery proportion (NDP), defined in \citep{bajwa2012group}:
\begin{equation}
\mathrm{FDP}(S)=\frac{|S\setminus S^*|}{|S|},\qquad
\mathrm{NDP}(S)=\frac{|S^*\setminus S|}{|S^*|},
\label{Eq:FDP:NDP}
\end{equation}
where \(S^*\) denotes the true support and \(S\) denotes the support estimated by a given algorithm. Smaller values of FDP and NDP indicate better support recovery performance. All computations are performed in 2.60GHz Intel(R) Xeon(R) Platinum 8358 CPU and 32GB main memory with Gurobi 13.0.1 solver.

\subsection{Comparison of exact algorithms}\label{Sec:com:exact}
We first compare the two exact algorithms in Section~\ref{sec:exact} under different configurations of \((D,n,d)\). Throughout this subsection, we fix \(\gamma=50\), \(M=100\), and \(\sigma^2=1\). Performance is evaluated in terms of the objective value, computational time, and FDP.
Table~\ref{table13-exact} reports the computational results for solving \eqref{eq:bigM} and \eqref{eq:misocp:final} via branch-and-bound. 
Overall, the perspective formulation outperforms the Big-$M$ formulation on most instances, and the advantage becomes more pronounced as the ambient dimension \(D\) increases. 
This observation is consistent with Theorem~\ref{thm:perspective-vs-bigM}, which shows that \eqref{eq:misocp:final} provides a stronger continuous relaxation. 
When \(D\in\{1000,2000\}\), both approaches often reach the prescribed time limit, indicating that solving the sparse newsvendor problem to global optimality can be computationally demanding in large-scale settings. 
We also observe that the FDP generally decreases as \(D\) increases in these experiments. A possible explanation is that the regularization parameter $\gamma$ is not optimally tuned such that the FDP values for different instances are not comparable.

\begin{table}[!ht] \centering
\caption{{Computational results of exact algorithms. Here we fix cardinality $d=50$.
NA indicates the algorithm fails to find any feasible solution within time limit.
}}
\footnotesize
\setlength{\tabcolsep}{2pt}\renewcommand{\arraystretch}{1.1}
\begin{tabular}{p{0.6cm} >{\centering\arraybackslash}m{1.2cm} p{1.4cm} p{1.4cm} p{1.4cm} p{1.4cm} p{1.4cm} p{1.4cm} p{1.4cm} p{1.4cm}}
\toprule
\multirow{2}{*}{$D$} & \multirow{2}{*}{$n$} & \multicolumn{4}{c}{Big-$M$} & \multicolumn{4}{c}{Perspective} \\
\cmidrule(lr){3-6} \cmidrule(lr){7-10} %
& & Objective & Gap~(\%) & Time~(s) & FDP~(\%) & Objective & Gap~(\%) & Time~(s) & FDP~(\%) \\
\midrule
 \multirow{3}{*}{100} & 500 & 10.71& 3.77  & 1800.00 & 52.00 & 10.65 & 9.69e-3 & 11.33 & 42.00\\
                      & 1000 & 10.52 & 1.02 & 1800.00 & 48.00 & 10.45 & 9.94e-3 & 50.79 & 40.00\\
                      & 5000 & 11.06 & 0.38 & 1800.00 & 26.00 &
                      10.91 & 0.14 & 1800.00 & 24.00
                      \\
\midrule
  \multirow{3}{*}{500} & 500 &
  12.38 & 100 & 1800.00 & 12.00 & 
  10.01 & 0.56 & 1800.00 & 9.70
  \\
                      & 1000 & 10.48 & 36.71 & 1800.00 & 11.00 & 
                      10.11 & 1.37 & 1800.00 & 9.70
                      \\
                      & 5000 & 10.77 & 6.85 & 1800.00 & 7.50 &
                      10.71 & 1.39 & 1800.00 & 5.20
                      \\
\midrule
  \multirow{3}{*}{1000} & 500 & 9.55 & 100.00 & 1800.00 & 5.15 & 9.17 & 1.44 & 1800.00 & 4.85\\
                      & 1000 & 
                      11.54 & 100.00 & 1800.00 & 7.63 & 
                      10.34 & 1.44 & 1800.00 & 4.15\\
                      & 5000 & 
                      10.81 & 15.52 & 1800.00 & 4.26 &
                      10.74 & 3.17 & 1800.00 & 3.68\\
\midrule
  \multirow{3}{*}{2000} & 500 &
  12.01 & 100 & 1800.00 & 6.41 &
  9.90 & 1.99 & 1800.00 & 2.46\\
                      & 1000 & 
                      12.98 & 100 & 1800.00 & 5.51 &
                      9.56 & 4.34 & 1800.00 & 2.41\\
                      & 5000 & 
                      NA & NA & 1800.00 & NA & 11.37 & 41.04 & 1800.00 & 2.46\\
\bottomrule
\end{tabular}
\label{table13-exact}
\end{table}

\subsection{Results on Support Recovery}\label{SR}
Next, we compare the variable-selection performance of all methods. 
The noise variance \(\sigma^2\) is chosen so that
$\frac{\mathrm{Var}((\omega^*)^\top X)}{\mathrm{Var}(\varepsilon)}=10,$
that is, the signal-to-noise ratio (SNR) is \(10\).

\begin{figure}[!ht]
    \centering
    \includegraphics[width=\linewidth]{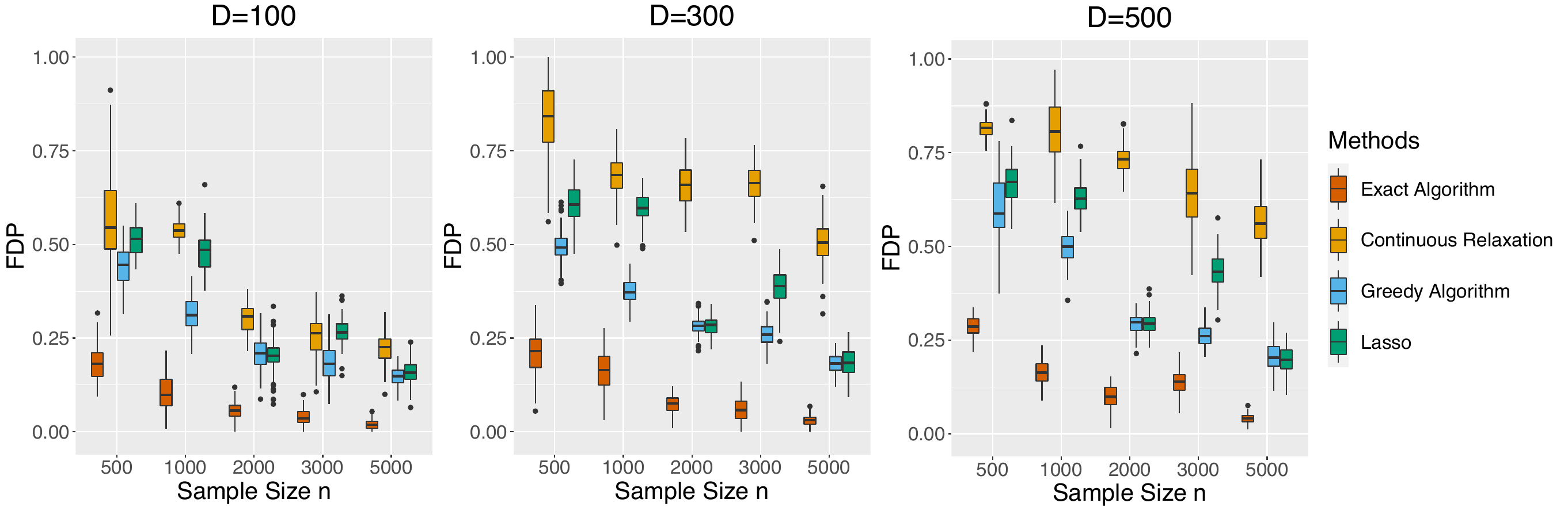}
    \caption{FDP with different choices of sample size $n$ and each subplot represents choices of dimension $D$.}
    \label{fig:FDP:D}
\end{figure}

\begin{figure}[!ht]
    \centering
    \includegraphics[width=\linewidth]{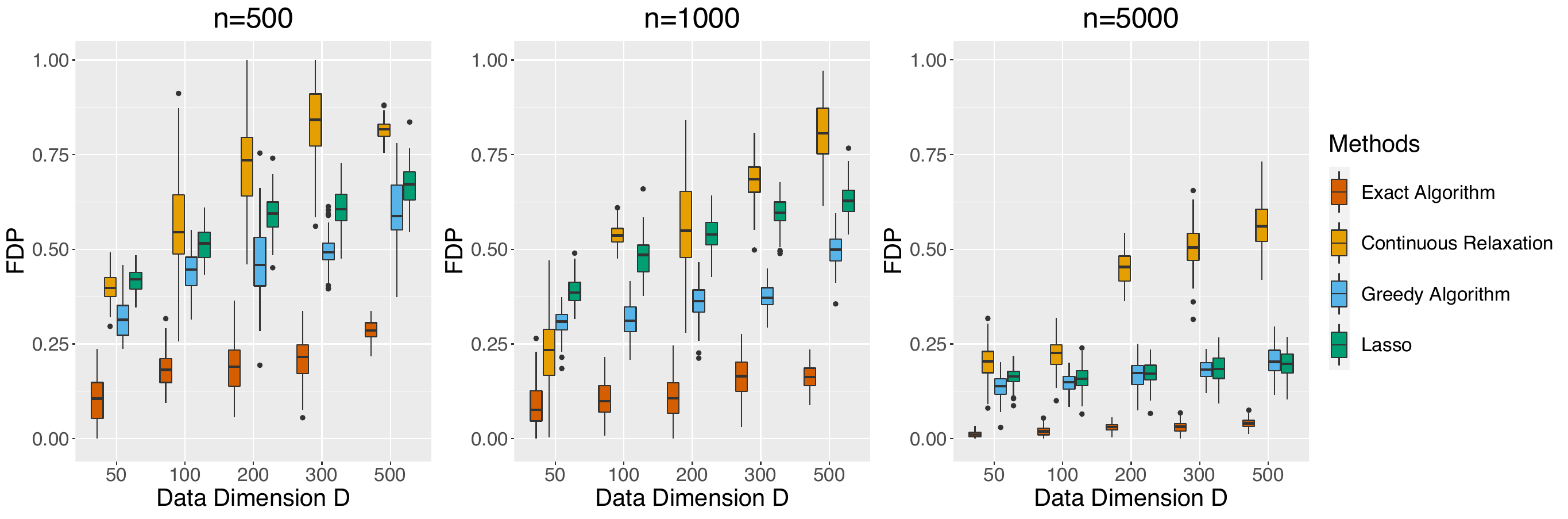}
    \caption{FDP with different choices of dimension $D$ and each subplot represents choices of sample size $n$.}
    \label{fig:FDP:n}
\end{figure}

We first consider the case in which the sparsity budget matches the true support size and is fixed at \(10\). 
In this setting, FDP and NDP coincide, so we report only FDP. 
Figures~\ref{fig:FDP:D} and~\ref{fig:FDP:n} show the FDP values under different combinations of \((n,D)\). 
The exact algorithms achieve the best support recovery performance, especially in the high-dimensional and small-sample regime. 
Among the scalable baselines, the greedy algorithm generally outperforms both the continuous-relaxation and Lasso baselines.

\begin{figure}[!ht]%
    \centering
    \includegraphics[width=1\linewidth]{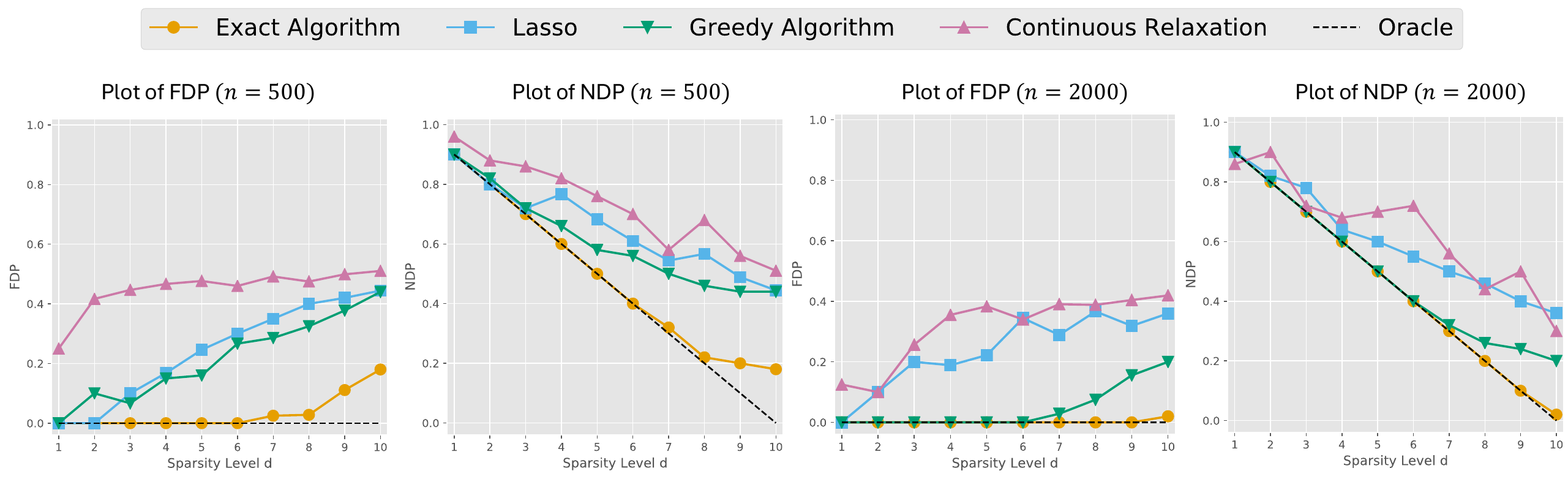}
    \caption{
    Comparison in terms of FDP and NDP for $n\in\{500,2000\}$ and $D=100$.}
    \label{fig:FDP:support}
\end{figure}

We next fix the true support size at \(10\) and vary the sparsity budget to compare the methods in terms of both FDP and NDP. 
Figure~\ref{fig:FDP:support} shows the results for \((D,n)\in\{(100,500),(100,2000)\}\). 
The black dashed line represents the ground-truth optimum of these metrics. 
As shown in the figure, exact algorithm remains close to the oracle benchmark across a wide range of sparsity budgets.

\subsection{Results on Parameter Estimation}
\label{sec:training quality}
We examine the performance of parameter estimation for various methods on problem instances where $D=100, n\in\{k\cdot100, k\in[10]\}$, and $\mathrm{SNR}\in\{0.1,1,10\}$. 
The $\ell_1$ or $\ell_2$ regularization parameters are tuned using cross-validation. 
Figure~\ref{fig:training quality} summarizes the numerical results, with the sample size $n$ on the $x$-axis and the logarithm of relative error on the $y$-axis. The relative error is defined as $|\hat{c} - c^*| / |c^*|$ or $\|\hat{\omega} - \omega^*\|_2 / \|\omega^*\|_2$, where $(\hat{\omega}, \hat{c})$ denotes the estimated parameters and $(\omega^*, c^*)$ the ground truth. All box plots are generated using $5$ independent trials.
The results indicate that, for all methods, estimation accuracy improves as the sample size $n$ increases. In particular, the exact and greedy algorithms consistently outperform the continuous relaxation and Lasso-based approaches in estimating the weight vector $\omega^*$, especially in the small-sample and low-SNR regimes.

\begin{figure}[!htbp]%
    \centering
    \includegraphics[width=1\linewidth]{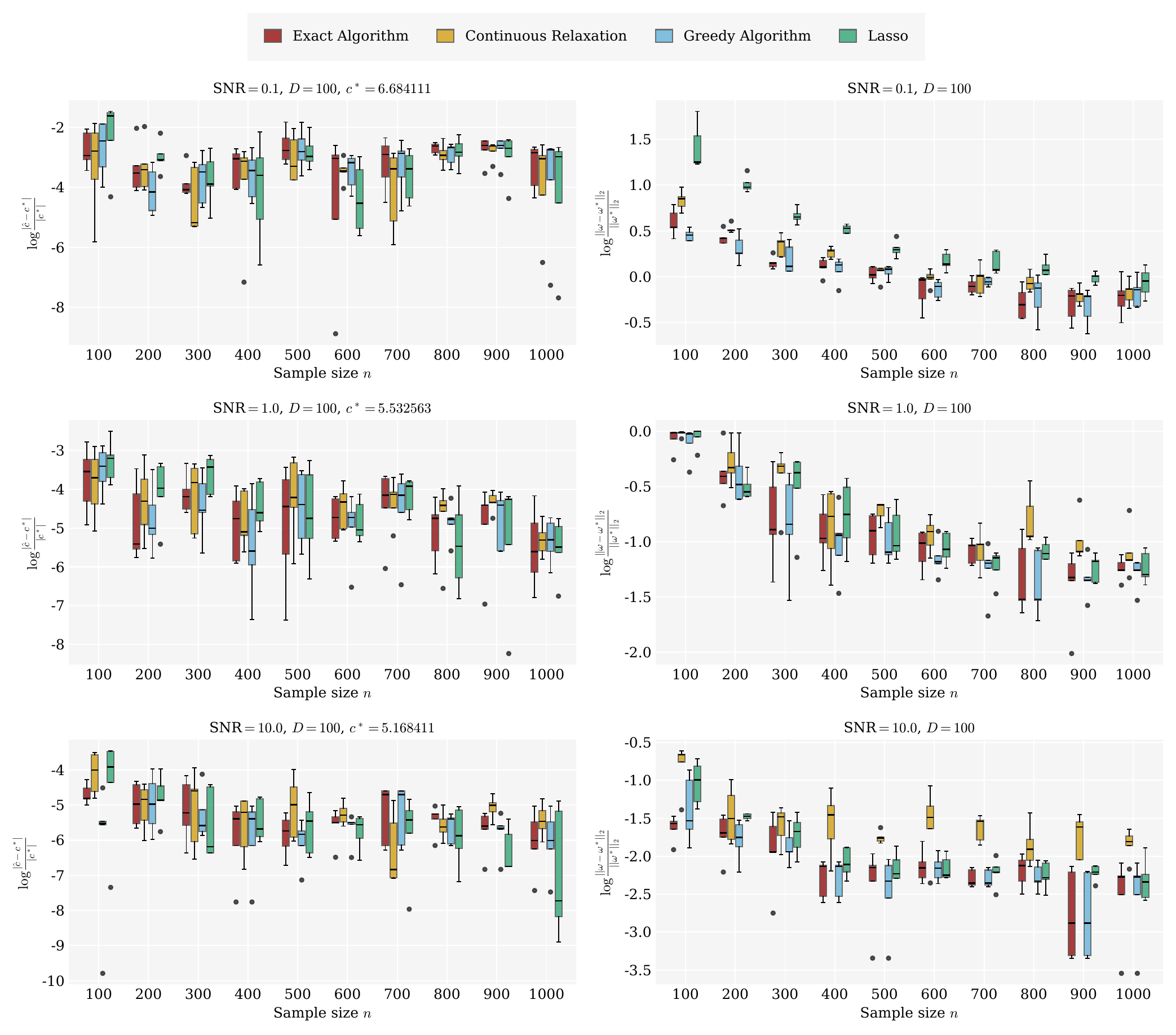}
    \caption{Comparison in terms of logarithm of relative errors for different $(n, D, \mathrm{SNR})$.}
    \label{fig:training quality}
\end{figure}

\subsection{Results on Out-of-Sample Performance}\label{Sec:out}

We examine the out-of-sample performance of various methods using the population newsvendor risk defined in \eqref{Eq:population:risk}, which is accurately estimated via sample average approximation with $10^5$ testing samples. The experimental setup follows that of the previous subsection, except that we also examine different choices of the sparsity level $d$. Recall the ground truth weight parameter $\omega^*$ is fixed with support size $|S^*|=10$. For the Lasso method, the $\ell_1$ regularization parameter is fine-tuned to achieve the desired sparsity level in the estimated parameter.
Figure~\ref{fig:generalization} summarizes the numerical results, with all error bars generated from $5$ independent trials. The left-hand-side subplots display the sample size $n$ on the $x$-axis. These results show that all methods improve with increasing $n$, while the exact and greedy algorithms consistently yield lower out-of-sample operational costs compared to the continuous relaxation and Lasso-based approaches. 
The right-hand-side subplots present the sparsity level $d$ on the $x$-axis. When the true sparsity level $|S^*|$ is underestimated ($d < |S^*|$), the exact and greedy algorithms dominate the others in terms of performance. When it is overestimated, the out-of-sample performance tends to stabilize. Together with the results in Sections~\ref{SR} and~\ref{sec:training quality}, our numerical findings demonstrate that the variable selection framework delivers satisfactory parameter estimation and out-of-sample performance while using substantially fewer covariates and yielding interpretable decisions.

\begin{figure}[!ht]%
    \centering
    \includegraphics[width=1\linewidth]{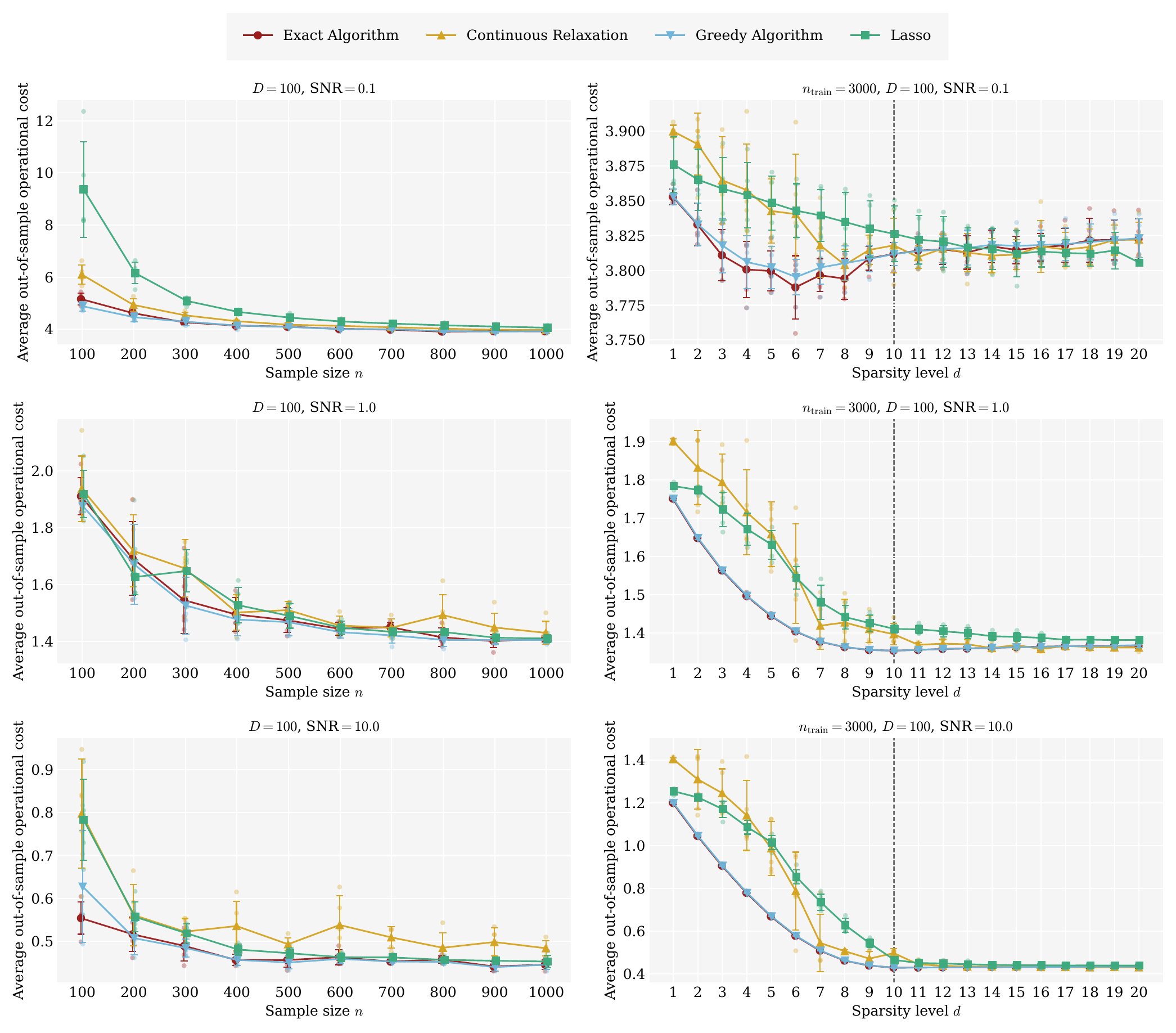}
    \caption{Comparison in terms of out-of-sample operational cost for different $(n, d, \mathrm{SNR})$.}
    \label{fig:generalization}
\end{figure}

\section{Real-world case study}
In this section, we evaluate the performance of various approaches using retail sales data from \texttt{Kaggle}\citep{Kaggle}. The real-world dataset is provided by Corporación Favorita, an Ecuadorian-based grocery retailer that sells over 200,000 products. During preprocessing, we focus on hardware products and define the response variable $Y$ as the sum of weekly unit sales across all four hardware product types. The covariate vector $X$ is constructed using standard feature transformations, including one-hot encoding and normalization. The data are restricted to the time period from \texttt{2012-12-31} to \texttt{2013-12-30}. The raw features in the dataset include: store number, city index, week index, month index, weekly averaged oil price, weekly promotion intensity, weekly transaction volume, holiday indicator, weekly number of hardware SKUs sold, and weekly number of active sales days. In addition, we incorporate the following two-way interactions as suggested in \citet{feature_rich}: (i) holiday indicator with weekly promotion intensity; (ii) store number with weekly averaged oil price; (iii) month index with city index; (iv) month index with store number; and (v) holiday indicator with store number. After preprocessing, the final dataset consists of $n = 2239$ samples and $D = 1157$ dimensions.
We split the data into 70\% training and 30\% testing sets across $20$ independent trials. The training data are used to estimate the weight vector and intercept, while the testing data are used for evaluating the (unit) out-of-sample performance.

\begin{table}[!ht] \centering
\caption{Out-of-sample operational cost and its standard deviation with different choices of penalty coefficient $\gamma$ and sparsity level $d$ (bold-faced values represent the lowest cost under the same $\gamma$)%
}
\footnotesize
\setlength{\tabcolsep}{2pt}\renewcommand{\arraystretch}{1.1}
\begin{tabular}{p{4cm} >{\centering\arraybackslash}m{2cm} p{1.4cm} p{1.4cm} p{1.4cm} p{1.4cm} p{1.4cm} p{1.4cm} p{1.4cm} p{1.4cm}}
\toprule
\multirow{2}{*}{method} & \multicolumn{2}{c}{$\gamma=0.5$} & \multicolumn{2}{c}{$\gamma=1$} & \multicolumn{2}{c}{$\gamma=5$} & \multicolumn{2}{c}{$\gamma=10$} \\

\cmidrule(lr){2-3} \cmidrule(lr){4-5} \cmidrule(lr){6-7} \cmidrule(lr){8-9}%
& mean & std  & mean & std & mean & std & mean & std \\
\midrule
Full model & \textbf{4.5359} & \textbf{0.1709} & {4.0300} & \textbf{0.1468} & 3.7305 & 0.1436 & 3.7678 & 0.1769 \\

Exact method ($d=50$) & 4.9280& 0.1842& 4.2086 &0.1672 &\textbf{3.2440} & \textbf{0.1154} & \textbf{3.2073} & \textbf{0.1246} \\

Exact method ($d=100$) &4.7706 &0.1840 &4.0875 &0.1639 &3.3469 &0.1289 & 3.3499 & 0.1540 \\

Exact method ($d=200$) & 4.6522&0.1748 &4.0460 &0.1507 &3.4718 &0.1363 & 3.5306 & 0.1586 \\

Exact method ($d=500$) & 4.5707& 0.1723& 4.0348& 0.1479&3.6941 &0.1462 & 3.7306 & 0.1793\\

Exact method ($d=800$) & 4.5374& 0.1696& \textbf{4.0281}&0.1475 &3.7296 &0.1435 & 3.7670 & 0.1757\\

\bottomrule
\end{tabular}
\label{table:real1}
\end{table}
We report the out-of-sample performance of the different methods in Table~\ref{table:real1}. Baseline approaches include the full model without variable selection, as well as our hard-cardinality-constrained model~\eqref{eq:5} under various sparsity levels $d$. An $\ell_2$ regularization term with different coefficients $\gamma$ is added into all models to mitigate overfitting. The numerical results indicate that as the sparsity budget $d$ increases, the out-of-sample performance of our variable selection framework approaches that of the full model. Moreover, with an appropriate choice of the $\ell_2$ regularization parameter and sparsity budget, our framework not only yields interpretable decisions but also attains the lowest out-of-sample operational costs among all methods considered.

\begin{table}[htbp]
\centering
\caption{Top 20 features selected using the exact algorithm with $d=100$ on the real-world data, where the entries with $x$ represent the interaction between two features}
\label{tab:realworld-top20-meanabs-d100}
\small
\setlength{\tabcolsep}{4pt}
\renewcommand{\arraystretch}{1.05}

\begin{tabular}{@{}c *{4}{P{2.8cm}}@{}}
\toprule
Rank & $\gamma=0.5$ & $\gamma=1$ & $\gamma=5$ & $\gamma=10$ \\
\midrule
1 & n\_days & n\_days & n\_days & n\_days \\
2 & n\_items & n\_items & n\_items & n\_items \\
3 & transactions & cluster\_5\_x\_oil & cluster\_5\_x\_oil & cluster\_5 \\
4 & store\_44 & store\_44\_x\_oil & store\_44\_x\_oil & store\_44 \\
5 & cluster\_5 & cluster\_5 & cluster\_5 & store\_44\_x\_oil \\
6 & store\_44\_x\_oil & store\_44 & store\_44 & cluster\_5\_x\_oil \\
7 & cluster\_5\_x\_oil & transactions & transactions & transactions \\
8 & cluster\_8 & cluster\_8 & month\_9\_x\_store\_9 & city\_Cuenca \\
9 & cluster\_8\_x\_oil & cluster\_8\_x\_oil & store\_8\_x\_oil & store\_3 \\
10 & type\_C\_x\_oil & store\_3 & store\_8 & store\_8 \\
11 & type\_C & store\_8 & store\_3 & store\_9\_x\_oil \\
12 & store\_3 & store\_8\_x\_oil & city\_Cuenca & month\_8\_x\_store\_9 \\
13 & store\_3\_x\_oil & store\_3\_x\_oil & store\_9 & month\_9\_x\_store\_9 \\
14 & store\_9 & store\_9 & type\_C & month\_2\_x\_store\_44 \\
15 & store\_8 & type\_C\_x\_oil & store\_9\_x\_oil & type\_C \\
16 & store\_9\_x\_oil & store\_9\_x\_oil & type\_C\_x\_oil & store\_8\_x\_oil \\
17 & store\_8\_x\_oil & type\_C & store\_3\_x\_oil & city\_Cuenca\_x\_oil \\
18 & city\_Cuenca & holiday\_x\_store\_44 & month\_8\_x\_store\_9 & month\_8\_x\_store\_44 \\
19 & holiday\_x\_store\_44 & store\_41\_x\_oil & city\_Cuenca\_x\_oil & store\_48\_x\_oil \\
20 & city\_Quito & store\_41 & cluster\_10\_x\_oil & store\_9 \\
\bottomrule
\end{tabular}
\end{table}

\begin{table}[ht!]
\centering
\caption{Top 20 features selected using the exact algorithm with $d=50$ on the real-world data}
\label{tab:realworld-top20-meanabs-d50}
\small
\setlength{\tabcolsep}{4pt}
\renewcommand{\arraystretch}{1.05}

\begin{tabular}{@{}c *{4}{P{2.8cm}}@{}}
\toprule
Rank & $\gamma=0.5$ & $\gamma=1$ & $\gamma=5$ & $\gamma=10$ \\
\midrule
1 & n\_days & n\_days & n\_days & n\_days \\
2 & n\_items & n\_items & n\_items & n\_items \\
3 & transactions & cluster\_5\_x\_oil & cluster\_5\_x\_oil & store\_44\_x\_oil \\
4 & store\_44 & store\_44\_x\_oil & store\_44\_x\_oil & cluster\_5\_x\_oil \\
5 & cluster\_5 & store\_44 & cluster\_5 & cluster\_5 \\
6 & store\_44\_x\_oil & transactions & store\_44 & store\_44 \\
7 & cluster\_5\_x\_oil & cluster\_5 & transactions & transactions \\
8 & type\_C\_x\_oil & type\_C\_x\_oil & store\_8 & city\_Cuenca \\
9 & type\_C & type\_C & store\_8\_x\_oil & store\_3 \\
10 & cluster\_8 & cluster\_8 & city\_Cuenca & store\_8 \\
11 & cluster\_8\_x\_oil & cluster\_8\_x\_oil & store\_3 & store\_8\_x\_oil \\
12 & store\_3 & store\_9 & store\_9\_x\_oil & month\_9\_x\_store\_9 \\
13 & store\_3\_x\_oil & store\_9\_x\_oil & city\_Cuenca\_x\_oil & month\_8\_x\_store\_9 \\
14 & store\_9 & store\_8 & store\_9 & store\_9\_x\_oil \\
15 & store\_9\_x\_oil & store\_8\_x\_oil & type\_C & month\_2\_x\_store\_44 \\
16 & store\_8 & store\_3 & store\_3\_x\_oil & store\_3\_x\_oil \\
17 & store\_8\_x\_oil & store\_3\_x\_oil & month\_9\_x\_store\_9 & store\_9 \\
18 & city\_Cuenca & holiday\_x\_store\_44 & type\_C\_x\_oil & month\_8\_x\_store\_44 \\
19 & city\_Cuenca\_x\_oil & city\_Cuenca & cluster\_8 & type\_C\_x\_oil \\
20 & city\_Quito & city\_Cuenca\_x\_oil & store\_48\_x\_oil & store\_48\_x\_oil \\
\bottomrule
\end{tabular}
\end{table}

We report the top $20$ selected features under different configurations of $(\gamma, d)$ in Tables~\ref{tab:realworld-top20-meanabs-d100} and \ref{tab:realworld-top20-meanabs-d50}. The features are ranked according to their averaged absolute weight values in $20$ independent trials. 
From these selected features, we draw the following operational insights: (I) The number of days per week on which customers purchase hardware products ({n\_days}) and the weekly number of active SKUs ({n\_items}) emerge as the most persistent features. A possible explanation is that customer demand is largely driven by weekend shopping habits and the breadth of product assortment available in stores. (II) The historical weekly transaction volume ({transactions}) consistently appears as a strong predictor, as it is highly correlated with future customer demand. (III) The location and type of the store~(such as those represented by the store\_3, 8, 9, 44, C, city\_Cuenca) play a critical role in predicting hardware sales. (IV) The interaction terms between store 9 and the months of August and September, as well as between store 44 and February and August, suggest that the effects of store location and time on operational outcomes are heterogeneous, with certain stores exhibiting pronounced seasonal demand spikes.

\section{Conclusion}
In this paper, we studied variable selection for the feature-based newsvendor problem under a hard cardinality constraint on the number of selected covariates. We establish both computational algorithms and statistical performance guarantees of this framework. The numerical results illustrate the trade-off among computational effort, support-recovery accuracy, and out-of-sample operating cost.

Several directions remain open for future research. First, it is promising to extend this variable selection framework for nonlinear policy classes, such as the kernel-based decision rules~\citep{wang2023variable}. Second, it is an interesting direction to design more efficient algorithms and hyperparameter selection procedures~\citep{cory2025stability, cory2023optimal}. Finally, extending the statistical analysis to misspecified, heavy-tailed, dependent, or distributionally robust settings would broaden the applicability of our framework.

\bibliographystyle{informs2014} 
\bibliography{shortbib}

\ECSwitch

\ECHead{Appendix for \emph{``Variable Selection for Feature-Based Newsvendor''}}

\section{Proof of Proposition~\ref{prop:hardness}}\label{Proof:Proposition:hardness}

We prove the NP-hardness result by making the reduction from \texttt{Exact Cover by 3-Sets}~(X3C) to Problem~\eqref{eq:5}.
The formal description of (X3C) is as follows. 
Let $Q$ be a ground set consisting of $m:=3d$ elements, and $\mathcal{C}=\{C_1,\ldots,C_D\}$ be the collection of subsets of $Q$ such that $|C_i|=3$ for $i\in[D]$. Problem~(X3C) aims to determine whether there exists a subset of $\mathcal{C}$, denoted as $\mathcal{C}'$ such that every element in $Q$ occurs exactly once in $\mathcal{C}'$.

Denote by $A\in\mathbb{R}^{m\times D}$ the incidence matrix of Problem~(X3C) such that $A[i,j]=1$ if the $i$-th element of $Q$ belongs to the subset $C_j$ and otherwise $A[i,j]=0, \forall i,j.$
We define a scalar $M_0$ such that $M_0>\frac{m}{3\gamma(b\land h)}$. We construct a special instance of Problem~\eqref{eq:5} by taking $n=2m$ and the $i$-th observation as
\[
(x_i, y_i) = \left\{
\begin{aligned}
(M_0 A[i,:]^\top,M_0)&\quad \text{if }1\le i\le m,\\
(-M_0 A[i-m,:]^\top, -M_0)&\quad \text{if }m+1\le i\le n.
\end{aligned}
\right.
\]
For weight vector $\omega\in \mathbb{R}^{D}$, define the error vector $r=\mathbf{1}-A\omega\in\mathbb{R}^m$.
For any scalar $x$, define $x_+=\max\{x,0\}$ and $x_-=\max\{-x,0\}$.
Define $b\land h=\min(b,h)$.
Thus, the fitting error part can be bounded as 
\begin{align*}
&\frac{1}{n}\sum_{i=1}^n\Big[ 
b \cdot \max\{0, y_i - \omega^\top x_i-c\} + h \cdot \max\{0, \omega^\top x_i+c - y_i\}
\Big]\\
=&\frac{1}{2m}\sum_{i=1}^m
\Big[ 
b \cdot (M_0 r[i]-c)_+ + h \cdot (M_0 r[i]-c)_-
+
b \cdot (-M_0 r[i]-c)_+ + h \cdot (-M_0 r[i]-c)_-
\Big]\\
\ge&\frac{b\land h}{2m}\sum_{i=1}^m\Big[ 
|M_0 r[i]-c| + |M_0 r[i]+c|
\Big]\\
\ge&\frac{(b\land h)M_0}{m}\|r\|_1,
\end{align*}
where the last inequality is based on the relation $|u+v| + |u-v|\ge 2|u|$. 

Let $(\omega,c)$ be a feasible solution to Problem~\eqref{eq:5} with $\|\omega\|_0\leq d$.
We first show that $F(\omega,c)\ge d/(2\gamma)$.
\begin{itemize}
    \item 
Assume that $\|\omega\|_2^2\leq d$, then %
\begin{equation}
    |\mathbf{1}^\top\omega|=\left|
\sum_{i:~\omega[i]\ne0}\omega[i]
\right|\le 
\sqrt{
\sum_{i:~\omega[i]\ne0}1
}\sqrt{
\sum_{i:~\omega[i]\ne0}\omega^2[i]
}\le \sqrt{d}\|\omega\|_2,\label{Eq:1:omega:d}
\end{equation}
where the first inequality uses Cauchy-Schwarz inequality.
It follows that
\begin{align*}
d - \|\omega\|_2^2&\le d - \frac{|\mathbf{1}^\top\omega|^2}{d}=
(d-\mathbf{1}^\top\omega)\cdot\left(
1+\mathbf{1}^\top\omega/d
\right)\\
&\le 2(d-\mathbf{1}^\top\omega)\\
&=\frac{2}{3}\textbf{1}^\top r \le \frac{2}{3}\|r\|_1,
\end{align*}
where the first inequality is by \eqref{Eq:1:omega:d}, the second inequality is because $|\mathbf{1}^\top\omega|\le \sqrt{d}\|\omega\|_2\le d$, and the last equality is because each column of $A$ has exactly three ones and 
\begin{equation}
\textbf{1}^\top r = m - \textbf{1}^\top A\omega = m - (A^\top \textbf{1})^\top \omega 
=
3d - 3(\mathbf{1}^\top\omega).\label{Eq:1:r}
\end{equation}
then, the objective
\begin{equation}
F(\omega,c)\ge \frac{(b\land h)M_0}{m}\|r\|_1 + \frac{1}{2\gamma}\left( 
d - \frac{2}{3}\|r\|_1
\right)\ge \frac{d}{2\gamma},\label{Eq:F:omega:c:bound}
\end{equation}
where the last inequality is by the choice of $M_0$.
\item
Otherwise, for $\|\omega\|_2^2>d$, the objective value 
\[
F(\omega,c)\ge  \frac{1}{2\gamma}\|\omega\|_2^2 > \frac{d}{2\gamma}.
\]
\end{itemize}

We now show that Problem (X3C) can be reduced to Problem~\eqref{eq:5}. Assume we solve Problem~\eqref{eq:5} with optimal value upper bounded by $\frac{d}{2\gamma}$ and denote its optimal solution as $(\omega, c)$. This happens only if $\|\omega\|_2^2\le d$, and the inequalities in \eqref{Eq:F:omega:c:bound} all hold with equalities. It implies that $r=0=\textbf{1}-A\omega$. By \eqref{Eq:1:r}, it holds that $\textbf{1}^\top\omega = d$.
By \eqref{Eq:1:omega:d}, it holds that $\|\omega\|_2^2=d$. 
Then, all inequalities in \eqref{Eq:1:omega:d} always hold with equalities, and therefore $\omega$ has exactly $d$ nonzero entries.
The exact cover of (X3C) is given by $\{C_j:~\omega[j]\ne0\}$.

Conversely, suppose Problem~(X3C) has an exact cover.
Let $\omega\in\mathbb{R}^D$ be the associated weight vector such that $\omega[i]=1$ if the $i$-th element of $\mathcal{C}$ is selected to form the cover and otherwise $\omega[i]=0$. Then the incidence matrix $A$ satisfies that $r:=\textbf{1}-A\omega=0$ and $\omega$ satisfies $\|\omega\|_0=d$ and $\|\omega\|_2^2=d$. The objective value of the feasible solution $(\omega,0)$ to Problem~\eqref{eq:5} equals
\begin{align*}
F(\omega,0)=&\frac{1}{2m}\sum_{i=1}^m
\Big[ 
b \cdot (M_0 r[i])_+ + h \cdot (M_0 r[i])_-
+
b \cdot (-M_0 r[i])_+ + h \cdot (-M_0 r[i])_-
\Big]+\frac{d}{2\gamma}\\
=&\frac{d}{2\gamma}.
\end{align*}
It justifies that the optimal value of Problem~\eqref{eq:5} is bounded by $d/(2\gamma)$. The proof is completed.

\section{Proof of Theorem \ref{thm:perspective-vs-bigM}}\label{Appendix:thm:perspective-vs-bigM}
Both Big-$M$ and perspective reformulation can be written as the following form of optimization:
\begin{equation}
\begin{aligned}
\min_{\omega,c,s,\mu}\quad
& \frac{1}{n}\sum_{i=1}^n\Big[
b\max\{0,\, y_i-\omega^\top x_i-c\}
+
h\max\{0,\, \omega^\top x_i+c-y_i\}
\Big]
+\frac{1}{2\gamma}\|\mu\|_1\\
\mbox{s.t.}&\quad (\omega,c,s,\mu)\in \mathcal{V}^{\text{Big-M}}\text{ or }\mathcal{V}^{\text{Pers}},\\
&\quad \sum_{j\in[D]}s[j]\le d,
\end{aligned}
\end{equation}
where
\[
\begin{aligned}
\mathcal{V}^{\text{Big-M}}& =\left\{
(\omega, s, \mu,c)\in\mathbb{R}^D\times\{0,1\}^D\times\mathbb{R}_+^D\times\mathbb{R}:
\begin{array}{l}
\omega[i]^2\le \mu[i], i\in[D]\\
|\omega[i]|\le M[i]\,s[i], i\in[D]
\end{array}
\right\},\\
\mathcal{V}^{\text{Pers}} & =\left\{
(\omega, s, \mu,c)\in\mathbb{R}^D\times\{0,1\}^D\times\mathbb{R}_+^D\times\mathbb{R}:
\begin{array}{l}
\omega[i]^2\le \mu[i] s[i], i\in[D]\\
|\omega[i]|\le M[i]\,s[i], i\in[D]
\end{array}
\right\}.
\end{aligned}
\]
These two problems are equivalent because $\mathcal{V}^{\text{Big-M}}=\mathcal{V}^{\text{Pers}}$.
Indeed, when the binary variable $s[i]=0$, it holds that $\omega[i]=0$ and the first constraints appeared in $\mathcal{V}^{\text{Big-M}}$ and $\mathcal{V}^{\text{Pers}}$ are redundant. Otherwise, the first constraints in $\mathcal{V}^{\text{Big-M}}$ and $\mathcal{V}^{\text{Pers}}$ are equivalent.

For Part~(II), if considering the continuous relaxation of these two problems, the binary constraints $s[i]\in\{0,1\}$ in $\mathcal{V}^{\text{Big-M}}$ and $\mathcal{V}^{\text{Pers}}$ are replaced by $s[i]\in[0,1]$ for $i\in[D]$.
Denoted the continuous relaxation of these two sets as $\overline{\mathcal{V}}^{\text{Big-M}}$ and $\overline{\mathcal{V}}^{\text{Pers}}$.
It holds that $\overline{\mathcal{V}}^{\text{Pers}}\subseteq \overline{\mathcal{V}}^{\text{Big-M}}$ as $\omega[i]^2\leq \mu[i]s[i]$ implies $\omega[i]^2\leq \mu[i]$.
Therefore, the optimal value of continuous relaxation of perspective reformulation is larger than that of Big-$M$ reformulation.

\section{Proof of Theorem \ref{Theorem:approximation:guarantee}}\label{proof:thm:app}

For Part (I), we bound the support size of solution in Algorithm~\ref{Alg:con:algorithm}. Let $(\widehat{\omega}, \widehat{c}, \widehat{s}, \widehat{\mu})$ be the optimal solution to the continuous relaxation of~\eqref{eq:misocp:final}.
Let $(\tilde{\omega}, \tilde{c},\tilde{s})$ be the output of Algorithm~\ref{Alg:con:algorithm}.
The support size of $\tilde{\omega}$ is bounded by the sum of random variables $N:=\sum_{j\in[D]}\tilde{s}[j]$, where $\tilde{s}[j]\sim\mathrm{Bernoulli}(\widehat{s}[j])$. Its expected value 
\[
\mathbb{E}\left[ 
N
\right]=\sum_{j\in[D]}\widehat{s}[j]\le d.
\]
For $t>0$, we compute the moment-generating function of $N$:
\begin{align*}
M_{N}(t)&=\mathbb{E}[e^{tN}] = \prod_{j\in[D]}~\mathbb{E}[e^{t\tilde{s}[j]}]\\
&=\prod_{j\in[D]}~\left( 
1+(\widehat{s}[j]e^t - \widehat{s}[j])
\right)\le 
\prod_{j\in[D]}~\exp\left( 
\widehat{s}[j]e^t - \widehat{s}[j]
\right)\\
&=\exp\left( 
(e^t-1)\sum_{j\in[D]}\widehat{s}[j]
\right)\le \exp(d(e^{t}-1)),
\end{align*}
where the first inequality is based on the relation that $1+x\le e^x, \forall x$, and the last inequality is because $\sum_{j\in[D]}\widehat{s}[j]\le d$.
For $\Delta>0$, applying the Markov inequality gives
\begin{align*}
\mathbb{P}\left( 
N\ge (1+\Delta)d
\right)&=\mathbb{P}\left( 
e^{tN}\ge e^{t(1+\Delta)d}
\right)\le \frac{M_N(t)}{e^{t(1+\Delta)d}}\\
&\le  \exp(d(e^{t}-1) - t(1+\Delta)d).
\end{align*}
We choose $t$ such that the upper bound above is as sharp as possible, yielding the optimal choice $t=\log(1+\Delta)$.
Then, it holds that
\[
\mathbb{P}\left( 
N\ge (1+\Delta)d
\right)
\le \exp(
-d((1+\Delta)\log(1+\Delta)-\Delta)
).
\]
For $\Delta\ge0$, it is easy to verify that $\log(1+\Delta)\ge \frac{2\Delta}{1+\Delta}$, and then
\[
\mathbb{P}\left( 
N\ge (1+\Delta)d
\right)
\le \exp\left(
-\frac{d\Delta^2}{2+\Delta}
\right).
\]
Taking $\Delta=\frac{\xi}{2}+\sqrt{2\xi+\frac{\xi^2}{4}}$ with $\xi:=\log(2/\delta)/d$ yields
\begin{equation}
\mathbb{P}\left( 
\|\tilde{\omega}\|_0\ge (1+\Delta)d
\right)
\le 
\mathbb{P}\left( 
N\ge (1+\Delta)d
\right)
\le \frac{\delta}{2}.\label{Eq:concentration:result}
\end{equation}
We replace this bound with a conservative one~($\Delta=\log(2/\delta)/d+\sqrt{2\log(2/\delta)/d}$) to simplify the notation in the theorem statement.

For Part~(II), we first reformulate the continuous relaxation of Problem~\eqref{eq:misocp:final}:
\begin{equation}
\begin{aligned}
\min_{\omega,c,s,\mu}\quad
& \frac{1}{n}\sum_{i=1}^n\Big[
b\max\{0,\, y_i-\omega^\top x_i-c\}
+
h\max\{0,\, \omega^\top x_i+c-y_i\}
\Big]
+\frac{1}{2\gamma}\|\mu\|_1\\
\mbox{s.t.}&\quad (\omega,c,s,\mu)\in\mathbb{R}^D\times\mathbb{R}\times[0,1]^D\times\mathbb{R}_+^D,\\
&\quad \sum_{j\in[D]}s[j]\le d,\\
&\quad \omega[i]^2\le \mu[i] s[i], i\in[D].
\end{aligned}
\end{equation}
The variable $\mu$ above can be eliminated by taking $\mu[i]=\frac{\omega[i]^2}{s[i]}$~(by default, we take $\mu[i]=0$ if $\omega[i]=s[i]=0$), yielding the reformulation
\begin{equation}
\begin{aligned}
\min_{\omega,c,s}\quad
& \frac{1}{n}\sum_{i=1}^n\Big[
b\max\{0,\, y_i-\omega^\top x_i-c\}
+
h\max\{0,\, \omega^\top x_i+c-y_i\}
\Big]
+\sum_{i=1}^D\frac{\omega[i]^2}{2\gamma s[i]}\\
\mbox{s.t.}&\quad (\omega,c,s)\in\mathbb{R}^D\times\mathbb{R}\times[0,1]^D,\\
&\quad \sum_{j\in[D]}s[j]\le d.
\end{aligned}
\end{equation}
By fixing $s$ and taking the duality of the remaining optimization problem, we obtain the min-max reformulation of the continuous relaxation:
\begin{equation}
\min_{
\substack{s\in[0,1]^D,\\
\sum_{j\in[D]}s[j]\le d
}}~\left\{g(s)=\max_{z[i]\in[-b,h], i\in[n],\sum_{i=1}^n z[i]=0} -z^\top
    \left( 
\frac{\gamma}{2n^2}
X\mathrm{Diag}(s)X^\top
    \right)z-\frac{1}{n}z^\top y\right\}.
\end{equation}
It is also clear that $\text{optval}\eqref{eq:5}=\min~\{g(s):~s\in\{0,1\}^D, \sum_{j\in[D]}s[j]\le d\}$.
By definition, 
\begin{equation}
F(\tilde{\omega}, \tilde{c})-\text{optval}\eqref{eq:5}\leq F(\tilde{\omega}, \tilde{c})-g(\widehat{s})= g(\widetilde{s})-g(\widehat{s}),\label{Eq:gap:optval:tilde}
\end{equation}
where the first inequality is because $\widehat{s}$ is the optimal solution obtained from continuous relaxation, and the second inequality is by strong duality.

Next, we provide the upper bound on $g(\tilde{s})-g(\widehat{s})$.
Define the inner objective of $g(s)$ as 
\[
\mathfrak{g}(s,z):=-z^\top
    \left( 
\frac{\gamma}{2n^2}
X\mathrm{Diag}(s)X^\top
    \right)z-\frac{1}{n}z^\top y.
\]
Let $\tilde{z}$ be the maximizer of the inner problem of $g(\tilde{s})$, then 
\begin{equation}\label{Eq:g:simplify}
\begin{aligned}
g(\tilde{s})-g(\widehat{s})& \leq \mathfrak{g}(\tilde{s},\tilde{z}) - \mathfrak{g}(\widehat{s}, \tilde{z})\\
&=-\frac{\gamma}{2n^2}\tilde{z}^\top\left( 
X\mathrm{Diag}(\tilde{s} - \widehat{s})X^\top
\right)\tilde{z}\\
&\le \frac{\gamma}{2n^2}\|\tilde{z}\|_2^2
\lambda_{\max}\Big(
X\mathrm{Diag}(\widehat{s} - \tilde{s} )X^\top
\Big)
\\
&\le \frac{\gamma(b\lor h)^2}{2n}\lambda_{\max}\Big(
X\mathrm{Diag}(\widehat{s} - \tilde{s} )X^\top
\Big),
\end{aligned}
\end{equation}
where the first inequality is because $\tilde{z}$ is a feasible solution (but perhaps suboptimal) to the inner problem of $g(\widehat{s})$, the second inequality is based on the Cauchy-Schwarz inequality, and the last one is because $\|\tilde{z}\|_2^2\le n(b\lor h)^2$.
It suffices to build the concentration bound on $\lambda_{\max}\Big(
X\mathrm{Diag}(\widehat{s} - \tilde{s} )X^\top
\Big)=
\lambda_{\max}\Big( 
\sum_{i\in[D]}R_i
\Big)$ with $R_i=(\widehat{s}[i] - \tilde{s}[i])X_{:,i}X_{:,i}^\top$.
Following the similar argument as in \citep[Proof of Lemma~3]{xie2020scalable}, it can be shown that $\mathbb{E}[R_i]=0, \lambda_{\max}(R_i)\le \theta_1$ and $\|\sum_{i\in[D]}\mathbb{E}[R_i^2]\|_2\le \theta_1\theta_d$.
Applying the matrix concentration bound in \citep[Theorem~1.4]{Tropp2010UserFriendlyTB} yields
\begin{equation}
\mathbb{P}\Big\{
\lambda_{\max}\Big(
X\mathrm{Diag}(\widehat{s} - \tilde{s} )X^\top
\Big)\ge t
\Big\}\le n\exp\left(
-\frac{t^2}{2\theta_1\theta_d + 2/3{\theta_1}t}
\right).\label{Eq:random:matrix}
\end{equation}
Taking $t=\frac{2}{3}{\theta_1}\log(2n/\delta) + \sqrt{2\theta_1\theta_d\log(2n/\delta)}$ ensures the right-hand-side of \eqref{Eq:random:matrix} being bounded by $\delta/2$.
Combining the inequalities in \eqref{Eq:gap:optval:tilde}, \eqref{Eq:g:simplify}, and \eqref{Eq:random:matrix} yields
\begin{equation*}
\begin{aligned}
&\mathbb{P}\left\{
F(\tilde{\omega}, \tilde{c})-\text{optval}\eqref{eq:5}
\le 
 \frac{\gamma(b\lor h)^2}{2n}\cdot\left( 
\frac{2}{3}{\theta_1}\log(2n/\delta) + \sqrt{2\theta_1\theta_d\log(2n/\delta)}
 \right)
\right\}\\
\ge&\mathbb{P}\left\{
\lambda_{\max}\Big(
X\mathrm{Diag}(\widehat{s} - \tilde{s} )X^\top
\Big)\le\frac{2}{3}{\theta_1}\log(2n/\delta) + \sqrt{2\theta_1\theta_d\log(2n/\delta)}
\right\}\\
\ge&1-\frac{\delta}{2}.
\end{aligned}
\end{equation*}
The choice of $\gamma$ in the main statement of Theorem~\ref{Theorem:approximation:guarantee} further ensures that 
\begin{equation}
\mathbb{P}\left\{
F(\tilde{\omega}, \tilde{c})-\text{optval}\eqref{eq:5}
\le 
\epsilon
\right\}\ge 1-\delta/2.\label{Eq:diff:F:optval}
\end{equation}
Applying the union bound of the concentration results in \eqref{Eq:concentration:result} and \eqref{Eq:diff:F:optval} gives the desired result.

\section{Proof of Theorem~\ref{Theorem:error:bound}}\label{Proof:theorem:error}
Let us define the empirical cost 
\[
\cR_n(\omega, c)=\frac{1}{n}\sum_{i=1}^n\Big[b \cdot \max\{0, y_i - \omega^\top x_i-c\} + h \cdot \max\{0, \omega^\top x_i+c - y_i\}\Big].
\]
and the population cost
\[
\cR(\omega, c) = \mathbb{E}_{(X,Y)}\left[ 
b \cdot \max\{0, Y - \omega^\top X-c\} + h \cdot \max\{0, \omega^\top X+c - Y\}
\right].
\]
We rely on the following two technical lemmas that are helpful to show the proof of Theorem~\ref{Theorem:error:bound}.
\begin{lemma}[Global Risk Lower Bound]\label{Lemma:risk:lower}
Under Assumption~\ref{Assumption:structure:data}, for any $\Delta:=(u^\top,t)^\top\in\mathbb{R}^{D+1}$ with $\|u\|_0\le 2d$, it holds that
\[
\cR(\omega^* + u, c^* + t) - \cR(\omega^*, c^*)\ge \frac{(b+h)\underline{f}\kappa r_0}{2(r_0 + K\|\Delta\|_2)}\cdot \|\Delta\|_2^2.
\]
\end{lemma}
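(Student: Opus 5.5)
The plan is to use Knight's identity for the check loss and then a convexity/rescaling argument. Write $\rho_\tau(v)=v(\tau-\mathbf{1}_{\{v<0\}})$, so the pointwise loss equals $(b+h)\rho_\tau(Y-\omega^\top X-c)$. Set $\varepsilon = Y-(\omega^*)^\top X-c^*$ and $w:=u^\top X+t$. Then
\[
\cR(\omega^*+u,c^*+t)-\cR(\omega^*,c^*)=(b+h)\,\mathbb{E}\big[\rho_\tau(\varepsilon-w)-\rho_\tau(\varepsilon)\big].
\]
Knight's identity gives
\[
\rho_\tau(\varepsilon-w)-\rho_\tau(\varepsilon)=-w(\tau-\mathbf{1}_{\{\varepsilon<0\}})+\int_0^{w}\big(\mathbf{1}_{\{\varepsilon\le s\}}-\mathbf{1}_{\{\varepsilon\le 0\}}\big)\diff s .
\]
Because $\varepsilon$ is independent of $X$ and $F_\varepsilon(0)=\tau$ by Assumption~\ref{Assumption:structure:data}\ref{Assumption:structure:data:I} (with a density, so no atom at $0$), the linear term has zero conditional mean given $X$. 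The first-moment condition makes every expectation finite. We are left with
\[
(b+h)\,\mathbb{E}_X\Big[\int_0^{w}\big(F_\varepsilon(s)-F_\varepsilon(0)\big)\diff s\Big],
\]
and the integrand is nonnegative for either sign of $w$ because $F_\varepsilon$ is nondecreasing.

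Next, lower bound the integral with the density condition. For $|s|\le r_0$ we have $|F_\varepsilon(s)-F_\varepsilon(0)|\ge \underline{f}|s|$. This gives $\int_0^w(\cdot)\,\mathrm{d}s\ge \underline f\,\min(|w|,r_0)^2/2$. Dropping the tail part past $r_0$ only decreases the integral, by monotonicity.

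The step I expect to be the main obstacle is turning $\mathbb{E}[\min(|w|,r_0)^2]$ into the stated factor $r_0/(r_0+K\|\Delta\|_2)$, since $|w|$ may exceed $r_0$. By Cauchy--Schwarz and Assumption~\ref{Assumption:structure:data}\ref{Assumption:structure:data:III}, $|w|=|\Delta^\top\widetilde X|\le K\|\Delta\|_2$ almost surely. I would use the elementary inequality $\min(a,r_0)^2\ge \frac{r_0}{r_0+a}a^2$ for $a\ge0$:
\begin{itemize}
\item if $a\le r_0$ it is trivial;
\item if $a>r_0$ it reads $r_0^2(r_0+a)\ge r_0a^2$, which is false for large $a$.
\end{itemize}
So this fails, and I would instead bound the integral directly. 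For $|w|>r_0$ the integrand is at least $\underline f r_0$ beyond $r_0$, so the integral is at least $\underline f(r_0|w|-r_0^2/2)\ge \underline f r_0|w|/2$. Combined with the small case, this gives
\[
\int_0^w\big(F_\varepsilon(s)-F_\varepsilon(0)\big)\diff s\ \ge\ \frac{\underline f}{2}\min(w^2,r_0|w|)\ \ge\ \frac{\underline f\,r_0}{2(r_0+|w|)}\,w^2 ,
\]
where the last step checks each case: if $|w|\le r_0$ then $\frac{r_0}{r_0+|w|}\le1$, and otherwise $\frac{r_0 w^2}{r_0+|w|}\le r_0|w|$.

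Finally, use $|w|\le K\|\Delta\|_2$ to replace the denominator by the deterministic $r_0+K\|\Delta\|_2$. Taking expectation and applying the restricted eigenvalue condition Assumption~\ref{Assumption:structure:data}\ref{Assumption:structure:data:IV} (valid since $\|u\|_0\le 2d$) gives $\mathbb{E}[w^2]\ge\kappa\|\Delta\|_2^2$. This yields the claimed bound $\frac{(b+h)\underline f\kappa r_0}{2(r_0+K\|\Delta\|_2)}\|\Delta\|_2^2$.
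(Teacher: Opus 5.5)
Your proposal is correct and follows essentially the same route as the paper: Knight's identity, the linear term vanishing by independence and $F_\varepsilon(0)=\tau$, the two-case bound $\int_0^w (F_\varepsilon(s)-F_\varepsilon(0))\diff s \ge \frac{\underline{f}}{2}|w|\min\{|w|,r_0\}\ge \frac{\underline{f} r_0 w^2}{2(r_0+|w|)}$, and then $|w|\le K\|\Delta\|_2$ together with the restricted eigenvalue condition. The only cosmetic difference is that your form of Knight's identity (with $\mathbf{1}_{\{\varepsilon\le 0\}}$ in the integrand) holds only off the event $\{\varepsilon=0\}$; this is harmless because $\varepsilon$ has a density, as you note.
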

\proof{Proof.}
By definition, 
\begin{align*}
&\cR(\omega^* + u, c^* + t) - \cR(\omega^*, c^*)\\
=&(b+h)\mathbb{E}_{(X,Y)}\left[ 
\rho_{\tau}(Y - (\omega^*+u)^\top X - (c^*+t)) - \rho_{\tau}(Y - (\omega^*)^\top X - c^*)
\right]\\
=&(b+h)\mathbb{E}_{(\varepsilon,X)}\left[ 
\rho_{\tau}(\varepsilon - \Delta^\top \widetilde{X}) - \rho_{\tau}(\varepsilon)
\right],
\end{align*}
where the first equality is by the definition of quantile function $\rho_{\tau}(u)=u(\tau - \textbf{1}_{\{u<0\}})$, and the second one is based on the model assumption~\eqref{Eq:model} and the expressions that $\widetilde{X}=(X^\top, 1)^\top, \Delta=(u^\top,t)^\top$.
Based on the Knight's identity~\citep{Knight1998}, it holds that
\begin{equation}
\begin{aligned}
&\mathbb{E}_{(\varepsilon,X)}\left[ 
\rho_{\tau}(\varepsilon - \Delta^\top \widetilde{X}) - \rho_{\tau}(\varepsilon)
\right]\\
=&\mathbb{E}_{(\varepsilon,X)}\left[ 
- \Delta^\top \widetilde{X}(\tau-\textbf{1}_{\{\varepsilon<0\}}) + \int_0^{\Delta^\top \widetilde{X}}\Big( 
\textbf{1}_{\{\varepsilon\le s\}} - \textbf{1}_{\{\varepsilon<0\}}
\Big)\diff s
\right]\\
=&\mathbb{E}_{(\varepsilon,X)}\left[ \int_0^{\Delta^\top \widetilde{X}}\Big( 
\textbf{1}_{\{\varepsilon\le s\}} - \textbf{1}_{\{\varepsilon<0\}}
\Big)\diff s
\right]=
\mathbb{E}_{X}\left[
\int_0^{\Delta^\top \widetilde{X}}\Big(F_{\varepsilon}(s) - F_{\varepsilon}(0)\Big)\diff s
\right],
\end{aligned}\label{Eq:rho:diff}
\end{equation}
where the second and last equalities are due to the assumption that $\varepsilon$ and $X$ are independent together with the definition of the cumulative distribution function~(cdf) $F_{\varepsilon}(s):=\mathbb{P}(\varepsilon\le s)$.

We provide lower bound on $\int_0^{\Delta^\top \widetilde{X}}\Big(F_{\varepsilon}(s) - F_{\varepsilon}(0)\Big)\diff s$ as follows.
When $|\Delta^\top \widetilde{X}|\le r_0$, by Assumption~\ref{Assumption:structure:data}\ref{Assumption:structure:data:II}, 
\[
\int_0^{\Delta^\top \widetilde{X}}\Big(F_{\varepsilon}(s) - F_{\varepsilon}(0)\Big)\diff s
\ge \int_0^{|\Delta^\top \widetilde{X}|}\underline{f}s\diff s=\frac{\underline{f}(\Delta^\top \widetilde{X})^2}{2},
\]
where the inequality is because $F_{\varepsilon}(s) - F_{\varepsilon}(0) = \int_0^sf_{\varepsilon}(t)\diff t\ge \int_0^s\underline{f}\diff t=\underline{f}s$ for $s\ge0$.
When $\Delta^\top \widetilde{X}>r_0>0$, it holds that
\begin{align*}
&\int_0^{\Delta^\top \widetilde{X}}\Big(F_{\varepsilon}(s) - F_{\varepsilon}(0)\Big)\diff s
\ge
\int_0^{r_0}\Big(F_{\varepsilon}(s) - F_{\varepsilon}(0)\Big)\diff s
+
\int_{r_0}^{\Delta^\top \widetilde{X}}\Big(F_{\varepsilon}(r_0) - F_{\varepsilon}(0)\Big)\diff s\\
\ge&\frac{\underline{f}r_0^2}{2} + (\Delta^\top \widetilde{X} - r_0)\underline{f}r_0\ge \frac{\underline{f}r_0\Delta^\top \widetilde{X}}{2},
\end{align*}
where the first inequality is by separating the integral on $[0, \Delta^\top\widetilde{X}]$ into two parts and by the monotonicity of the cdf, the second inequality is by Assumption~\ref{Assumption:structure:data}\ref{Assumption:structure:data:II}, and the last one is by the relation $\Delta^\top \widetilde{X}>r_0$.
It can be shown using the similar argument that $\int_0^{\Delta^\top \widetilde{X}}\Big(F_{\varepsilon}(s) - F_{\varepsilon}(0)\Big)\diff s\ge -\underline{f}r_0\Delta^\top \widetilde{X}/2$ for $\Delta^\top \widetilde{X}<-r_0<0$.
Combining these cases yields the global lower bound
\[
\int_0^{\Delta^\top \widetilde{X}}\Big(F_{\varepsilon}(s) - F_{\varepsilon}(0)\Big)\diff s
\ge \underline{f}|\Delta^\top \widetilde{X}|/2\cdot\min\{
r_0, |\Delta^\top \widetilde{X}|
\}\ge \frac{\underline{f}r_0(\Delta^\top \widetilde{X})^2}{2(r_0 + |\Delta^\top \widetilde{X}|)}.
\]
Substituting this lower bound to \eqref{Eq:rho:diff} yields
\begin{align*}
&\cR(\omega^* + u, c^* + t) - \cR(\omega^*, c^*)\\
\ge&\mathbb{E}_X\left[ 
\frac{(b+h)\underline{f}r_0(\Delta^\top \widetilde{X})^2}{2(r_0 + |\Delta^\top \widetilde{X}|)}
\right]\ge \mathbb{E}_X\left[ 
\frac{(b+h)\underline{f}r_0(\Delta^\top \widetilde{X})^2}{2(r_0 + \|\Delta\|_2K)}
\right]\ge
\frac{ (b+h)\underline{f}r_0\kappa\|\Delta\|_2^2}{2(r_0 + \|\Delta\|_2K)},
\end{align*}
where the second and last inequalities leverage Assumptions~\ref{Assumption:structure:data}\ref{Assumption:structure:data:III} and \ref{Assumption:structure:data}\ref{Assumption:structure:data:IV}, respectively. 
\QED
\endproof
\begin{lemma}[Uniform Concentration Bound on Risk]\label{Lemma:uniform:bound}
Under Assumption~\ref{Assumption:structure:data}, there exists a constant $C_0$ depending on $b,h,K$ such that, with probability at least $1-\delta$, simultaneously for every $\Delta:=(u^\top,t)^\top\in\mathbb{R}^{D+1}$ with $\|u\|_0\le 2d$, it holds that
\[
\begin{multlined}
    \left|
\Big( 
\cR_n(\omega^* + u, c^* + t) - \cR_n(\omega^*, c^*)
\Big)
-
\Big( 
\cR(\omega^* + u, c^* + t) - \cR(\omega^*, c^*)
\Big)
\right|\\
\le C_0\|\Delta\|_2\cdot \sqrt{\frac{d\log(eD/d)+1+\log(2/\delta)}{n}}.
\end{multlined}
\]
\end{lemma}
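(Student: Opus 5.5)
I will prove Lemma~\ref{Lemma:uniform:bound} with a union bound over supports, combined with a self-normalized empirical process bound on each fixed support. Throughout, write $\widetilde X_i=(x_i^\top,1)^\top$ and $\varepsilon_i=y_i-(\omega^*)^\top x_i-c^*$. For $\Delta=(u^\top,t)^\top$ set
\[
\ell_\Delta(\widetilde X,\varepsilon)=(b+h)\big[\rho_\tau(\varepsilon-\Delta^\top\widetilde X)-\rho_\tau(\varepsilon)\big].
\]
Since $\rho_\tau$ is $\max(\tau,1-\tau)$-Lipschitz, $|\ell_\Delta|\le (b\lor h)|\Delta^\top\widetilde X|\le (b\lor h)K\|\Delta\|_2$ by Assumption~\ref{Assumption:structure:data}\ref{Assumption:structure:data:III}. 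The quantity to control is $\sup_\Delta |(P_n-P)\ell_\Delta|/\|\Delta\|_2$. By positive homogeneity of the bound in $\|\Delta\|_2$ it is tempting to normalize, but $\ell_\Delta$ is not homogeneous in $\Delta$, so I will not rescale. Instead I work with the class
\[
\mathcal{L}_{T,r}=\{\ell_\Delta:\operatorname{supp}(u)\subseteq T,\ \|\Delta\|_2\le r\}
\]
for a fixed support $T$ with $|T|=2d$ (plus the intercept coordinate). I will then peel over $r$.

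\textbf{Step 1: fixed $T$ and $r$.} Use symmetrization. Then apply the Ledoux--Talagrand contraction: $\ell_\Delta$ is a $(b\lor h)$-Lipschitz function of $\Delta^\top\widetilde X$ that vanishes at $0$, so contraction applies pointwise in $\varepsilon$. This bounds $\mathbb{E}\sup_{\mathcal{L}_{T,r}}|(P_n-P)\ell_\Delta|$ by
\[
4(b\lor h)\,\mathbb{E}\sup_{\|\Delta_T\|\le r}\Big|\frac1n\sum_i\sigma_i\Delta^\top\widetilde X_i\Big|\le 4(b\lor h)\,rK\sqrt{\frac{2d+1}{n}},
\]
where the last step uses Cauchy--Schwarz and Jensen, since $\mathbb{E}\|\frac1n\sum_i\sigma_i(\widetilde X_i)_T\|_2\le K/\sqrt n$. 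The dimension factor in fact only enters through the union bound, so the bound is even $\lesssim rK/\sqrt n$. Next apply McDiarmid (or Bousquet): each summand changes the supremum by at most $2(b\lor h)Kr/n$. So with probability $1-\eta$ the deviation is at most $C rK(b\lor h)\big(1+\sqrt{\log(1/\eta)}\big)/\sqrt n$.

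\textbf{Step 2: union over supports and peeling.} There are $\binom{D}{2d}\le (eD/(2d))^{2d}$ supports, so $\log\binom{D}{2d}\lesssim d\log(eD/d)$. For the radius, use the shells $r_k=2^k r_{\min}$. The shells with $\|\Delta\|_2\le r_{\min}$ are handled trivially, because the left side of the lemma is bounded by $2(b\lor h)K\|\Delta\|_2$ anyway. In fact this trivial bound holds for all $\Delta$, which suggests the peeling range can be taken as finite. Allocate $\eta_{T,k}=\delta/(2\binom{D}{2d}k^2)$ and take a union bound. On the shell $r_{k-1}<\|\Delta\|_2\le r_k$, the bound from Step 1 with $r=r_k\le 2\|\Delta\|_2$ gives
\[
C\|\Delta\|_2\sqrt{\frac{d\log(eD/d)+\log(2/\delta)+\log k}{n}}.
\]

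\textbf{Main obstacle.} The hard part is the $\log k$ term from peeling, which must be removed so that the constant $C_0$ depends only on $b,h,K$ and does not grow with the number of shells. I would first try to bypass peeling entirely. The idea is to exploit the fact that the Lipschitz-contraction bound scales linearly in $r$, and to use a ratio-type (normalized) concentration: apply Talagrand/Bousquet directly to the normalized class
\[
\big\{\ell_\Delta/\|\Delta\|_2\big\},
\]
which is uniformly bounded by $(b\lor h)K$. Its Rademacher complexity is controlled by contraction applied conditionally, via the star-shapedness argument that $r\mapsto \sup_{\|\Delta\|\le r}|(P_n-P)\ell_\Delta|/r$ is nonincreasing, since $\ell_{s\Delta}$ is convex in $s$ with $\ell_0=0$. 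This monotonicity means that the bound at a single small radius, combined with convexity along rays, controls all radii. That would yield the stated rate with the additive constant $1$ absorbing the leftover terms. If this fails, I would accept $\log\log$ factors, absorbing them into the $1+\log(2/\delta)$ term by restricting the relevant radii to a bounded range such as $\|\Delta\|_2\le$ (poly $n$), which is all that Theorem~\ref{Theorem:error:bound} needs.
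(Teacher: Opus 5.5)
Your Step~1 (symmetrization, contraction and McDiarmid for a fixed support and a fixed radius) and the union bound over supports are fine. The proposal never reaches what the lemma asks for, though: a bound on $|(P_n-P)\ell_\Delta|/\|\Delta\|_2$ that holds simultaneously for all radii $\|\Delta\|_2\in(0,\infty)$, with $C_0$ depending only on $b,h,K$.

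\textbf{Small radii are not trivial.} The bound $2(b\lor h)K\|\Delta\|_2$ misses the target $C_0\|\Delta\|_2\sqrt{(d\log(eD/d)+1+\log(2/\delta))/n}$ by a factor of order $\sqrt{n}$. So the shells $\|\Delta\|_2\le r_{\min}$ are not ``handled trivially.'' The peeling range is unbounded at both ends, and the resulting $\log k$ cost cannot be absorbed into $C_0$.

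\textbf{The monotonicity claim is false.} A star-shapedness argument needs $\alpha\ell_\Delta$ to lie in the class for $\alpha\in[0,1]$. Convexity of $s\mapsto\ell_{s\Delta}$ gives only the pointwise inequality $\ell_{\alpha\Delta}\le\alpha\ell_\Delta$, which says nothing about the sign of $(P_n-P)(\alpha\ell_\Delta-\ell_{\alpha\Delta})$. Your justification applies verbatim to the intercept directions $\Delta=(0^\top,t)^\top$, where it fails. Take $b=h=1$, so $\ell_t(\varepsilon)=|\varepsilon-t|-|\varepsilon|$, and any sample in which exactly half of the $\varepsilon_i$ are negative. For $|t|<\min\{r_0,\min_i|\varepsilon_i|\}$,
\[
P_n\ell_t=0,\qquad P\ell_t=\int_0^t\big(2F_\varepsilon(s)-1\big)\diff s\ \ge\ \underline{f}\,t^2 .
\]
Hence $\sup_{|t|\le r}|(P_n-P)\ell_t|/r$ tends to $0$ as $r\downarrow0$, yet is at least $\underline{f}\,r>0$ for small $r>0$. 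It increases rather than decreases.

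\textbf{The fallback is weaker than the lemma.} Restricting to a bounded range of radii and accepting $\log\log n$ losses proves only an $n$-dependent, localized statement, and it still leaves the small-radius end open.

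\textbf{The missing idea.} Convexity is what makes normalization work despite the lack of homogeneity; this is the paper's route. Fix a unit direction $v=(v_u^\top,v_t)^\top$ with $\|v_u\|_0\le2d$. The secant slope $h_{r,v}:=\ell_{rv}/r$ is \emph{pointwise} nondecreasing in $r>0$ and bounded by $(b\lor h)K$.
\begin{itemize}
\item Let $r_1<\dots<r_M$ index a $\delta$-packing of $\{h_{r,v}:r>0\}$ in the empirical $L_2$ metric. Bounding the sum of squared nonnegative increments by the square of their telescoping sum gives $(M-1)\delta^2<\frac1n\sum_i\big(h_{r_M,v}-h_{r_1,v}\big)^2(x_i,y_i)\le4(b\lor h)^2K^2$. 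So a whole ray $r\in(0,\infty)$ has covering number $O(\delta^{-2})$, however large the range of $r$.
\item For fixed $r$, $h_{r,v}$ depends on $v$ in a $(b\lor h)K$-Lipschitz way. Covering the $2d$-sparse unit sphere then gives $\log\mathcal N(\delta)\lesssim d\log(eD/d)+1+d\log(1/\delta)$ for the normalized class.
\item Dudley's integral, symmetrization and McDiarmid, applied to this uniformly bounded class, give the lemma for all $\Delta$ at once, with no peeling and no $\log k$.
\end{itemize}
Your contraction step cannot replace this entropy bound for the normalized class. Ledoux--Talagrand requires the same Lipschitz maps for every index. Here the map $s\mapsto(b+h)\big[\rho_\tau(\varepsilon-rs)-\rho_\tau(\varepsilon)\big]/r$, applied to $s=v^\top\widetilde X$, changes with $r$.
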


\proof{Proof.}
The inequality for $\Delta=0$ holds trivially. We focus on the case where $\Delta\ne0$ instead.
Define the function $\ell(y,z)=b\max\{0,y-z\}+h\max\{0,z-y\}$, and
the function class
\[
\mathcal{H}=\left\{ 
(x,y)\mapsto 
\frac{\ell(y,x^\top(\omega^*+u)+c^*+t) - \ell(y, x^\top\omega^*+c^*)}{\|\Delta\|_2}:~\Delta\ne0, \Delta:=(u^\top,t)^\top, \|u\|_0\le 2d
\right\}.
\]
It suffices to provide concentration bound on 
\[
\sup_{h\in\mathcal{H}}~\Big| 
\mathbb{E}_{X,Y}[h(X,Y)] - \frac{1}{n}\sum_{i=1}^nh(x_i,y_i)
\Big|,
\]
which can be shown based on the covering number argument.

Each element in the function class $\mathcal{H}$ can be indexed with two terms: $r:=\|\Delta\|_2$ and $v:=\frac{\Delta}{\|\Delta\|_2}$. Hence, we reformulate
\[
\mathcal{H}=\left\{ 
\begin{aligned}
&(x,y)\mapsto h_{r,v}(x,y):=\frac{\ell(y,x^\top(\omega^*+rv_u)+c^*+rv_t) - \ell(y, x^\top\omega^*+c^*)}{r},\\
&r>0, v=(v_u^\top,v_t)^\top\in\mathbb{R}^{D+1}, \|v\|_2=1, \|v_u\|_0\le 2d
\end{aligned}
\right\}.
\]
Define the empirical Rademacher complexity
\[
\widehat{\mathfrak R}_n(\mathcal H)
:=
\mathbb E_{\sigma_i, i\in[n]}
\left[
\sup_{h\in\mathcal H}
\left|
\frac1n\sum_{i=1}^n\sigma_i h(x_i,y_i)
\right|
\Big|
(x_i,y_i), i\in[n]
\right],
\]
where $\sigma_i, i\in[n]$ are i.i.d. Rademacher random variables.
We provide upper bound on $\widehat{\mathfrak R}_n(\mathcal H)$ using the following procedure. 
\begin{itemize}
    \item 
For fixed $v\in\mathbb{R}^{D+1}$, denote the subset of $\mathcal{H}$ as
\[
\mathcal{H}_v = \Big\{ 
h_{r,v}:~r>0
\Big\}.
\]
Define the empirical $L_2$ pseudo-metric as
\[
\mathbf{d}_{n}(f,g) = \left\{ 
\frac{1}{n}\sum_{i=1}^n\Big( 
f(x_i,y_i) - g(x_i,y_i)
\Big)^2
\right\}^{1/2}.
\]
Let $r_1,\ldots,r_M$ with $0<r_1<\cdots<r_M$ index an $\delta$-packing of $\mathcal{H}_v$ under the metric $\mathbf{d}_{n}(\cdot)$. Therefore, 
\begin{align*}
(M-1)\delta^2&<\sum_{j=1}^{M-1}\mathbf{d}_{n}(h_{r_j,v}, h_{r_{j+1}, v})^2
=
\frac{1}{n}\sum_{i=1}^n\sum_{j=1}^{M-1}\Big( 
h_{r_j,v}(x_i,y_i) - h_{r_{j+1}, v}(x_i,y_i)
\Big)^2.
\end{align*}
According to the formulation of $h_{r,v}$ and by the convexity of $\ell(y,\cdot)$, it is easy to show that $h_{r,v}(x,y)$ is non-decreasing in $r$, and therefore
\begin{align*}
    (M-1)\delta^2&<\frac{1}{n}\sum_{i=1}^n\sum_{j=1}^{M-1}\Big( 
h_{r_j,v}(x_i,y_i) - h_{r_{j+1}, v}(x_i,y_i)
\Big)^2\\
&\le \frac{1}{n}\sum_{i=1}^n\left[ 
\sum_{j=1}^{M-1}\Big( 
h_{r_{j+1}, v}(x_i,y_i) - h_{r_j,v}(x_i,y_i)
\Big)
\right]^2\\
&=\frac{1}{n}\sum_{i=1}^n\Big( 
h_{r_{M}, v}(x_i,y_i) - h_{r_1,v}(x_i,y_i)
\Big)^2\le 4(b\lor h)^2K^2,
\end{align*}
where the last inequality uses the fact that for any $(x,y)$ and $r>0$,
\[
\left| 
h_{r,v}(x,y)
\right|\le (b\lor h)\cdot \frac{|x^\top(rv_u) + rv_t|}{r}\le (b\lor h)\|(x^\top,1)^\top\|_2\|v\|_2\le (b\lor h)K.
\]
The packing number $M\le 1 + 4(b\lor h)^2K^2/\delta^2$. Therefore, the covering number of $\mathcal{H}_v$ under $\mathbf{d}_n(\cdot)$ is bounded:
\[
\mathcal{N}\Big(
\delta, \mathcal{H}_v, \mathbf{d}_n
\Big)\le 1 + 4(b\lor h)^2K^2/\delta^2.
\]
\item
Next, we study the covering number of the set 
\[
\mathcal{V} = \Big\{ 
v=(v_u^\top,v_t)^\top\in\mathbb{R}^{D+1}:~\|v\|_2=1, \|v_u\|_0\le 2d
\Big\}.
\]
For fixed support $S\subseteq[D]$ with $|S|\le 2d$, the $\delta$-covering number of 
$\mathcal{V}_J=\Big\{ 
v=(v_u^\top,v_t)^\top\in\mathbb{R}^{D+1}:~\|v\|_2=1, \mathrm{supp}(v_u)=J
\Big\}$ under $\|\cdot\|_2$ is at most $(1 + 2/\delta)^{2d+1}$. By taking the union of all sparse supports, the covering number
\[
\mathcal{N}(\delta, \mathcal{V}, \|\cdot\|_2)\le \left( 
\sum_{k=0}^{\min\{2d,D\}}\binom{D}{k}
\right)(1 + 2/\delta)^{2d+1}.
\]
\item
For fixed $r>0$ and any two unit vectors $v=(v_u, v_t), v'=(v_u', v_t')$, it holds that 
\[
\Big| 
h_{r,v}(x,y) - h_{r, v'}(x,y)
\Big|\le (b\lor h)\cdot \frac{|x^\top(rv_u- rv_u') + rv_t - rv_t'|}{r}\le (b\lor h)K\|v - v'\|_2.
\]
For any $h_{r,v}\in \mathcal{H}$, we select $v'$ in the $\delta$-cover of $\mathcal{V}$ and $h_{r',v'}$ in the $\delta$-cover of $\mathcal{H}_{v'}$. It follows that
\begin{align*}
&\mathbf{d}_n(h_{r,v}, h_{r',v'})=\left[ 
\frac{1}{n}\sum_{i=1}^n\Big( 
h_{r,v}(x_i,y_i) -  h_{r',v'}(x_i,y_i)
\Big)^2
\right]^{1/2}\\
\le &\left[ 
\frac{2}{n}\sum_{i=1}^n\Big( 
h_{r,v}(x_i,y_i) -  h_{r,v'}(x_i,y_i)
\Big)^2 + \frac{2}{n}\sum_{i=1}^n\Big( 
h_{r',v'}(x_i,y_i) -  h_{r,v'}(x_i,y_i)
\Big)^2
\right]^{1/2}\\
\le&\left[ 
2(b\lor h)^2K^2\delta^2 + 2\delta^2
\right]^{1/2}=(2 + 2(b\lor h)^2K^2)^{1/2}\delta.
\end{align*}
Therefore, 
\[
\mathcal{N}\Big(
(2 + 2(b\lor h)^2K^2)^{1/2}\delta, \mathcal{H}, \mathbf{d}_n
\Big)\le \mathcal{N}\Big(
\delta, \mathcal{H}_v, \mathbf{d}_n
\Big)\cdot \mathcal{N}(\delta, \mathcal{V}, \|\cdot\|_2).
\]
It further implies that 
\begin{align*}
\log\mathcal{N}\Big(
\delta, \mathcal{H}, \mathbf{d}_n
\Big)
\le&\log\left[ 
1 + \frac{8(b\lor h)^2K^2(1 + (b\lor h)^2K^2)}{\delta^2}
\right] +  \log\left( 
\sum_{k=0}^{\min\{2d,D\}}\binom{D}{k}
\right)\\
&\qquad\qquad\qquad+ (2d+1)\log\left[ 
1 + \frac{2\sqrt{2}(1 + (b\lor h)^2K^2)^{1/2}}{\delta}
\right]\\
\le&C_1\cdot \left[
d\log\left(\frac{eD}{d}\right)
+1
+d\log\left(\frac{1}{\delta}\right)
\right],
\end{align*}
where $C_1$ is a constant depending on $b,h,K$.
As each element in $\mathcal{H}$ is uniformly bounded by $(b\lor h)K$, by Dudley's entropy integral bound~\citep{dudley1999uniform}, the Rademacher complexity
\begin{align*}
\mathfrak{R}_n(\mathcal{H})&
\le \frac{C_2}{\sqrt{n}}\int_0^{(b\lor h)K}\sqrt{\log\mathcal{N}\Big(
\delta, \mathcal{H}, \mathbf{d}_n
\Big)}\diff\delta\\
&\le C_3\sqrt{\frac{d\log\left(\frac{eD}{d}\right)
+1}{n}},
\end{align*}
where $C_2, C_3$ are some constants depending on $b,h,K$.
\end{itemize}
Based on the symmetrization and McDiarmid’s inequalities, it holds with probability at least $1-\delta$ that
\begin{align*}
\sup_{h\in\mathcal{H}}~\left| 
\frac{1}{n}\sum_{i=1}^nh(x_i,y_i) - \mathbb{E}[h(X,Y)]
\right|&\le 
C_4\sqrt{\frac{d\log\left(\frac{eD}{d}\right)
+1}{n}} + (b\lor h)K\sqrt{\frac{2\log(2/\delta)}{n}}\\
&\le C_0\sqrt{\frac{d\log(eD/d) + 1 + \log(2/\delta)}{n}},
\end{align*}
where $C_0, C_4$ are  some constants depending on $b,h,K$.
Re-arranging the inequality above yields the desired result.
\QED
\endproof

Finally, we provide the bound on the residual error
\[
a_n:=\left\|
\begin{pmatrix}
\widehat{\omega}_n - \omega^*\\
\widehat{c}_n - c^*
\end{pmatrix}
\right\|_2.
\]
\proof{Proof of Theorem~\ref{Theorem:error:bound}.}
As $\|\widehat{\omega}_n-\omega^*\|_0\le 2d$, by Lemma~\ref{Lemma:risk:lower}, we obtain the lower bound
\begin{equation}
\cR(\widehat{\omega}_n, \widehat{c}_n) - \cR(\omega^*, c^*)\ge 
\frac{(b+h)\underline{f}\kappa r_0}{2(r_0 + Ka_{n})}\cdot a_{n}^2.\label{Eq:lower:bound:r}
\end{equation}
Next, we derive its upper bound.
By the definition of $(\widehat{\omega}_n, \widehat{c}_n)$, it holds that
\[
\cR_n(\widehat{\omega}_n, \widehat{c}_n) + \frac{1}{2\gamma_n}\|\widehat{\omega}_n\|_2^2
\le 
\cR_n(\omega^*, c^*) + \frac{1}{2\gamma_n}\|\omega^*\|_2^2.
\]
Leveraging this relation and re-arranging the terms yields
\begin{align*}
&\cR(\widehat{\omega}_n, \widehat{c}_n) - \cR(\omega^*, c^*)\\
=&\Big[\cR_n(\widehat{\omega}_n, \widehat{c}_n) - \cR_n(\omega^*, c^*)\Big] 
-
\Big[ 
\Big( 
\cR_n(\widehat{\omega}_n, \widehat{c}_n) - \cR_n(\omega^*, c^*)
\Big)
-
\Big( 
\cR(\widehat{\omega}_n, \widehat{c}_n) - \cR(\omega^*, c^*)
\Big)
\Big]\\
\le&\frac{1}{2\gamma_n}\Big[ 
\|\omega^*\|_2^2 - \|\widehat{\omega}_n\|_2^2
\Big]-
\Big[ 
\Big( 
\cR_n(\widehat{\omega}_n, \widehat{c}_n) - \cR_n(\omega^*, c^*)
\Big)
-
\Big( 
\cR(\widehat{\omega}_n, \widehat{c}_n) - \cR(\omega^*, c^*)
\Big)
\Big].
\end{align*}
The first term above can be further bounded as follows:
\[
\frac{1}{2\gamma_n}\Big[ 
\|\omega^*\|_2^2 - \|\widehat{\omega}_n\|_2^2
\Big]\le
\frac{\langle \omega^*,  \omega^* - \widehat{\omega}_n\rangle}{\gamma_n}
\le \frac{\|\omega^*\|_2a_n}{\gamma_n}.
\]
By Lemma~\ref{Lemma:uniform:bound}, with probability at least $1-\delta$, the second term above can be bounded as
\[
C_0a_n\sqrt{\frac{d\log(eD/d)+1+\log(2/\delta)}{n}}.
\]
Combining these two relations implies the following relation holds with probability at least $1-\delta$:
\begin{equation}
\cR(\widehat{\omega}_n, \widehat{c}_n) - \cR(\omega^*, c^*)
\le a_n\cdot\left( 
\frac{\|\omega^*\|_2}{\gamma_n} + C_0\sqrt{\frac{d\log(eD/d)+1+\log(2/\delta)}{n}}
\right).\label{Eq:error:risk}
\end{equation}
Comparing this with \eqref{Eq:lower:bound:r} yields
\[
\frac{(b+h)\underline{f}\kappa r_0}{2(r_0 + Ka_{n})}\cdot a_{n}^2
\le a_n\cdot\left( 
\frac{\|\omega^*\|_2}{\gamma_n} + C_0\sqrt{\frac{d\log(eD/d)+1+\log(2/\delta)}{n}}
\right).
\]
Specify $\frac{1}{2\gamma_n}\lesssim \sqrt{\frac{d\log(eD/d)}{n}}$, then there exists constant $C_1$ depending on $C_0, b, h, K, \|\omega^*\|_2$ such that 
\begin{equation}
\frac{(b+h)\underline{f}\kappa r_0}{2(r_0 + Ka_{n})}\cdot a_{n}
\le C_1\sqrt{\frac{d\log(eD/d)+1+\log(2/\delta)}{n}}.\label{Eq:a:n}
\end{equation}
Assume $n$ is sufficiently large such that 
\[
C_1\sqrt{\frac{d\log(eD/d)+1+\log(2/\delta)}{n}}\le \frac{(b+h)\underline{f}\kappa r_0}{4K},
\]
then re-arranging \eqref{Eq:a:n} yields
\begin{align*}
a_n&\le \frac{\left[C_1\sqrt{\frac{d\log(eD/d)+1+\log(2/\delta)}{n}}\right]r_0}{
\frac{(b+h)\underline{f}\kappa r_0}{2} - \left[C_1\sqrt{\frac{d\log(eD/d)+1+\log(2/\delta)}{n}}\right]K
}\\
\le& \frac{\left[C_1\sqrt{\frac{d\log(eD/d)+1+\log(2/\delta)}{n}}\right]r_0}{\frac{(b+h)\underline{f}\kappa r_0}{4}}=\frac{4C_1}{(b+h)\underline{f}\kappa}\cdot \sqrt{\frac{d\log(eD/d)+1+\log(2/\delta)}{n}}
\end{align*}
holds with probability at least $1-\delta$.
\QED 
\endproof

\section{Proof of Corollary~\ref{Theorem:out:sample:bound}}
By definition,
\[
\min_{\|\omega\|_0\le d, c}~\mathcal{R}(\omega,c) = 
\min_{\|\omega\|_0\le d, c}~\left\{
(b+h)
\mathbb{E}_{(X,Y)}\left[ 
\rho_{\tau}(Y-(\omega^\top X + c))
\right]
\right\}.
\]
By the likelihood model, for each $x$, $(\omega^*)^\top x + c^*$ is the $\tau$-quantile of $Y\mid X=x$, and therefore the optimization problem above takes the optimal solution $(\omega^*, c^*)$.
Recall from the proof of Theorem~\ref{Theorem:error:bound} that for sufficiently large $n$, with probability at least $1-\delta$, the following two relations hold:
\begin{align*}
\cR(\widehat{\omega}_n, \widehat{c}_n) - \cR(\omega^*, c^*)&\le a_nC_1\cdot
\sqrt{\frac{d\log(eD/d)+1+\log(2/\delta)}{n}},\\
a_n&\le \frac{4C_1}{(b+h)\underline{f}\kappa}\cdot \sqrt{\frac{d\log(eD/d)+1+\log(2/\delta)}{n}}.
\end{align*}
Combining these two relations yields
\[
\cR(\widehat{\omega}_n, \widehat{c}_n) - \cR(\omega^*, c^*)\le \frac{4C_1^2}{(b+h)\underline{f}\kappa}\cdot \frac{d\log(eD/d)+1+\log(2/\delta)}{n}.
\]

\section{Proof of Corollary~\ref{Theorem:support}}\label{Sec:corollary:support}

Under the event described in Corollary~\ref{Theorem:support}, for each $j\in S^*$, it holds that
\begin{equation*}
\begin{aligned}
    |\widehat{\omega}_n[j]|\geq |\omega^*[j]| - |\widehat{\omega}_n[j]-\omega^*[j]|\geq |\omega^*_j| - \left\|
\begin{pmatrix}
\widehat{\omega}_n - \omega^*\\
\widehat{c}_n - c^*
\end{pmatrix}
\right\|_2>0.
\end{aligned}
\end{equation*}
Therefore, $S^*\subseteq\text{supp}(\widehat{\omega}_n)$. 
On the other hand, by the feasibility of $\widehat{\omega}_n$, it holds that $|\text{supp}(\widehat{\omega}_n)|\leq d=|S^*|$.
This fact, together with $S^*\subseteq\text{supp}(\widehat{\omega}_n)$ implies that $S^*=\text{supp}(\widehat{\omega}_n)$.

\end{document}